\newsavebox{\algleft}
\newsavebox{\algright}
\DeclareMathOperator*{\argmin}{arg\,min}
\newcommand{\E}[1]{\mathbb{E} \left[#1\right]}
\newtheorem{theorem}{Theorem}
\newtheorem{lemma}{Lemma}
\newtheorem{corollary}{Corollary}
\newtheorem{assumption}{Assumption}
\newtheorem{example}{Example}
\newcommand\blfootnote[1]{%
  \begingroup
  \renewcommand\thefootnote{}\footnote{#1}%
  \addtocounter{footnote}{-1}%
  \endgroup
}
\title{UFO-BLO: Unbiased First-Order Bilevel Optimization}
\author{%
Valerii Likhosherstov$^{*1}$, Xingyou Song$^{*2}$, Krzysztof Choromanski$^{2}$ \\
  \textbf{Jared Davis$^{3}$, Adrian Weller$^{14}$} \\
 $^1$University of Cambridge $^2$Google Brain $^3$DeepMind $^4$Alan Turing Institute\\
}
\begin{document}

\maketitle

\begin{abstract}
Bilevel optimization (BLO) is a popular approach with many applications including hyperparameter optimization, neural architecture search, adversarial robustness and model-agnostic meta-learning. However, the approach suffers from time and memory complexity proportional to the length $r$ of its inner optimization loop, which has led to  several modifications being proposed. One such modification is \textit{first-order} BLO (FO-BLO) which approximates outer-level gradients by zeroing out second derivative terms, yielding significant speed gains and
requiring only constant memory as $r$ varies. Despite FO-BLO's popularity, there is a lack of theoretical understanding of its convergence properties. We make progress by demonstrating a rich family of examples where FO-BLO-based stochastic optimization does not converge to a stationary point of the BLO objective. We address this concern by proposing a new FO-BLO-based unbiased estimate of outer-level gradients, enabling us to theoretically guarantee this convergence, with no harm to memory and expected time complexity. 
Our findings are supported by experimental results on Omniglot and Mini-ImageNet, popular few-shot meta-learning benchmarks.\footnote{A revised version of this manuscript (\href{https://arxiv.org/abs/2106.02487}{https://arxiv.org/abs/2106.02487}) has been accepted to ICML 2021.}
\end{abstract}
\blfootnote{$^\ast$Equal contribution.}
\footnotetext[14]{\texttt{\{vl304,aw665\}@cam.ac.uk}}
\footnotetext[23]{\texttt{\{xingyousong,kchoro,jaredquincy\}@google.com}}

\section{Introduction}

Bilevel optimization (BLO) is a popular technique of defining an outer-level objective function through the result of an inner optimizer's loop. BLO finds many applications in various subtopics of deep learning, including hyperparameter optimization \cite{hyperopt,hyperoptimpl,truncated,revopt}, neural architecture search \cite{darts}, adversarial robustness \cite{advrobbilevel,advrobgraphs} and meta-learning \cite{maml,truncated}.

In a standard setup, the outer-level optimization is conducted via stochastic gradient descent (SGD) where gradients are obtained by automatic differentiation \cite{autograd} through the $r$ steps of the inner-level gradient descent (GD). This implies the need to store all $r$ intermediate inner-optimization states in memory in order to review them during back-propagation. Thus, longer inner-GD lengths $r$ can be prohibitively expensive. To address this, several approximate versions of BLO were proposed. Among them is truncated back-propagation \cite{truncated} which only stores a fixed amount of the last inner optimization steps. Implicit differentiation \cite{impl1,impl2,imaml,hyperoptimpl} takes advantage of the Implicit Function Theorem, application of which however comes with a list of restrictions to be fulfilled (see Section \ref{sec:comp}). Designing optimizer's iteration as a reversible dynamics \cite{revopt} allows underlying-constant reduction in $O(r)$ memory complexity since only the bits lost in fixed-precision arithmetic operations are saved in memory. Forward differentiation \cite{fordiff} can be employed when the outer-level gradients for only a small number of parameters are collected. Checkpointing \cite{ckpt} is a generic solution for memory reduction by a factor of $\sqrt{r}$.

Arguably the most practical and simplest method is a first-order BLO approximation (FO-BLO), where second-order computations are omitted, and hence rollout states are not stored in memory but only attended once. FO-BLO is widely used in the context of neural architecture search \cite{darts}, adversarial robustness \cite{advrobbilevel,advrobgraphs} and meta-learning \cite{maml}. FO-BLO can be viewed as a limit form of truncated back-propagation where no states are cached in memory. We argue, however, that the cost of FO-BLO computational simplicity is high: we show that FO-BLO \textit{can fail to converge to a stationary point} of the BLO objective. Consequently, it would be highly desirable to modify FO-BLO so that, under the same time and memory complexity, it would possess better convergence properties.

We propose a solution: a FO-BLO-based algorithm which, with no harm to time and memory complexity, benefits from unbiased gradient estimation. To achieve this, we first propose a method to compute precise BLO gradients using only constant memory at the cost of quadratic time $O(r^2)$. We then combine FO-BLO with this slow, but memory-efficient exact BLO, used as a stochastic gradient correction computed at random with probability $\propto r^{-1}$ per outer-loop iteration, to yield a convergent training algorithm. We call our algorithm unbiased first-order BLO (UFO-BLO) and highlight the following benefits:
\begin{itemize}[noitemsep,nolistsep] 
    \item UFO-BLO has the same $O(1)$ memory and (expected) $O(r)$ time complexity as FO-BLO (Theorem \ref{th:fomamltm}, Corollary \ref{cor:ufomamltm}); strictly better than the complexity of explicit BLO ($O(r)$ time and memory, Theorem \ref{th:mamltm}).
    \item Under mild assumptions, UFO-BLO with stochastic outer-loop optimization is guaranteed to converge to a stationary point of the BLO objective (Theorem \ref{th:conv}).
    \item Confirming the need for UFO-BLO, we prove that, under the same mild assumptions, there is a rich family of BLO problem formulations where stochastic FO-BLO optimization ends up arbitrarily far away from a stationary point (Theorem \ref{th:counter}).
\end{itemize}
In addition to theoretical contributions, we demonstrate the utility of UFO-BLO on the popular Omniglot \cite{omniglot} and Mini-ImageNet \cite{imagenet} benchmarks, standard settings for few-shot learning.

\begin{figure}[b]
  \centering
  \includegraphics[width=\textwidth]{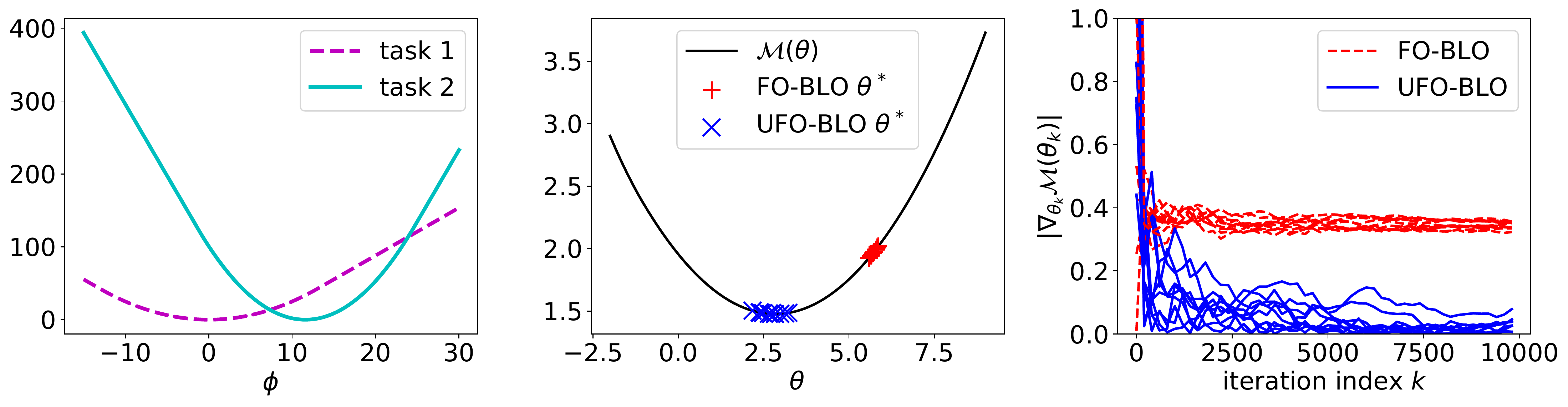}
  \caption{Example of a BLO problem where FO-BLO fails to converge to a stationary point of the optimization objective $\mathcal{M} (\theta)$. \textbf{(Left)} Two tasks $\mathcal{T}_1$, $\mathcal{T}_2$ are sampled equiprobably from $p(\mathcal{T})$. The goal of $i$'s task is to optimize $f_i (\phi)$ with respect to $\phi \in \mathbb{R}$. \textbf{(Middle)} Resulting BLO objective $\mathcal{M} (\theta)$ where $\theta$ is a starting parameter for $(r = 10)$-step inner optimization loop. Markers ``+'', ``x'' indicate results of $\mathcal{M} (\theta)$ mini-batch optimization (meta-batch size of 1) from a random starting parameter $\theta_0$ using FO-BLO and UFO-BLO ($q = 0.1$) respectively. \textbf{(Right)} Convergence of $| \nabla_{\theta_k} \mathcal{M} (\theta_k) |$ where $k$ is an index of outer-loop iteration. $\nabla_{\theta_k} \mathcal{M} (\theta_k)$ is approaching zero as UFO-BLO progresses which is not true for FO-BLO. More details about the experimental setup can be found in Appendix \ref{sec:synth} of the Supplement.}
  \label{fig:synth}
\end{figure}

\begin{figure}[ht]
\begin{minipage}{0.49\textwidth}
\begin{algorithm}[H]
\SetAlgoLined
\SetKwInOut{Input}{Input}
\Input{$\theta_0 \in \mathbb{R}^p$, $\tau, v \in \mathbb{N}$.}

\For{$k \gets 1$ \KwTo $\tau$}{
    Set $b = \mathbf{0}_u$\;
    \For{$w \gets 1$ \KwTo $v$}{
        Draw $\mathcal{T} = ( \mathcal{D}^{tr}, \mathcal{D}^{test} ) \sim p (\mathcal{T})$\;
        Set $b := b + \mathcal{G} (\theta_{k - 1}, \mathcal{T})$\;
    }
    Set $\theta_k := \theta_{k - 1} - \frac{\gamma_k}{v} b$\;
}
\caption{Outer mini-batch GD.}
\label{alg:sgd}
\end{algorithm}
\vspace{11.3pt}

\begin{algorithm}[H]
\SetAlgoLined
\SetKwInOut{Input}{Input}
\Input{$\theta \in \mathbb{R}^p$, $\alpha > 0$, $r \in \mathbb{N}$, $\mathcal{T}$.}
\KwResult{$\mathcal{G}_{FO} (\theta, \mathcal{T})$.}

Set $\phi := \theta$\;

\For{$j \gets 1$ \KwTo $r$}{
    Set $\phi := \phi - \alpha \nabla_\phi \mathcal{L}^{in} (\phi, \mathcal{T})$\;
}

\Return $\nabla_\phi \mathcal{L}^{out} (\phi, \mathcal{T})$\;

\caption{Inner GD (FO-BLO).}
\label{alg:fomaml}
\end{algorithm}
\vspace{11.3pt}

\begin{algorithm}[H]
\SetAlgoLined
\SetKwInOut{Input}{Input}
\Input{$\theta \in \mathbb{R}^p$, $\alpha > 0$, $r \in \mathbb{N}$, $\mathcal{T}$, $q \in (0, 1]$.}
\KwResult{$\mathcal{G}_{UFO} (\theta, \mathcal{T})$.}

Let $b := $ result of Algorithm \ref{alg:fomaml}\;

Draw $\xi \sim \mathrm{Bernoulli} (q)$\;

\eIf{$\xi = 1$}{
    Let $c := $ result of Algorithm \ref{alg:memmaml}\;
    \Return $b + \frac{1}{q} (c - b)$\;
}{
    \Return $b$\;
}

\caption{Inner GD (UFO-BLO).}
\label{alg:unbiased}
\end{algorithm}

\end{minipage}
\hfill
\begin{minipage}{0.49\textwidth}
\begin{algorithm}[H]
\SetAlgoLined
\SetKwInOut{Input}{Input}
\Input{$\theta \in \mathbb{R}^p$, $\alpha > 0$, $r \in \mathbb{N}$, $\mathcal{T}$, $\mathcal{T}$.}
\KwResult{$\mathcal{G} (\theta, \mathcal{T}) = \nabla_\theta \mathcal{L}^{out} (U (\theta, \mathcal{T}), \mathcal{T})$.}

New array $\textrm{Arr}[1..r]$\;

Set $\phi := \theta$\;

\For{$j \gets 1$ \KwTo $r$}{
    Set $\textrm{Arr} [j] := \phi$\;
    Set $\phi := \phi - \alpha \nabla_\phi \mathcal{L}^{in} (\phi, \mathcal{T})$\;
}

Set $b := \nabla_\phi \mathcal{L}^{out} (\phi, \mathcal{T})$\;

\For{$j \gets r$ \KwTo $1$}{
    Set $\phi := \textrm{Arr} [j]$\;
    Set $b := b - \alpha ( \nabla^2_\phi \mathcal{L}^{in} (\phi, \mathcal{T})^\top b )$\;
}
\Return $b$\;
\caption{Inner GD (BLO).}
\label{alg:maml}
\end{algorithm}

\begin{algorithm}[H]
\SetAlgoLined
\SetKwInOut{Input}{Input}
\Input{$\theta \in \mathbb{R}^p$, $\alpha > 0$, $r \in \mathbb{N}$, $\mathcal{T}$, $\mathcal{T}$.}
\KwResult{$\mathcal{G} (\theta, \mathcal{T}) = \nabla_\theta \mathcal{L}^{out} (U (\theta, \mathcal{T}), \mathcal{T})$.}

Set $\phi := \theta$\;

\For{$j \gets 1$ \KwTo $r$}{
    Set $\phi := \phi - \alpha \nabla_\phi \mathcal{L}^{in} (\phi, \mathcal{T})$\;
}

Set $b := \nabla_\phi \mathcal{L}^{out} (\phi, \mathcal{T})$ \;

\For{$j_1 \gets r$ \KwTo $1$}{

    Set $\phi := \theta$\;

    \For{$j_2 \gets 1$ \KwTo $j_1 - 1$}{
        Set $\phi := \phi - \alpha \nabla_\phi \mathcal{L}^{in} (\phi, \mathcal{T})$\;
    }

    Set $b := b - \alpha ( \nabla^2_\phi \mathcal{L}^{in} (\phi, \mathcal{T})^\top b )$\;
}
\Return $b$\;
\caption{Memory-efficient inner GD (BLO).}
\label{alg:memmaml}
\end{algorithm}

\end{minipage}
\end{figure}
\section{Preliminaries}

We commence by formulating the bilevel optimization (BLO) problem and the exact algorithm. We then discuss first-order BLO, and few-shot learning as a prominent example of the BLO problem.

\subsection{Bilevel optimization: problem statement}

We formulate the bilevel optimization (BLO) problem in a form which is compatible with prominent large-scale deep-learning applications. Namely, let $\Omega_\mathcal{T}$ be a non-empty set of \textit{tasks} $\mathcal{T} \in \Omega_\mathcal{T}$ and let $p(\mathcal{T})$ be a probabilistic distribution defined on $\Omega_\mathcal{T}$. In practice the procedure of sampling a task $\mathcal{T} \sim p(\mathcal{T})$ is usually resource-cheap and consists of retrieving a task from disk using either a random index (for a dataset of a fixed size), a stream, or a simulator. Define two functions as the inner and outer loss respectively: $\mathcal{L}^{in}, \mathcal{L}^{out}: \mathbb{R}^p \times \Omega_\mathcal{T} \to \mathbb{R}$. In addition, define $U: \mathbb{R}^p \times \Omega_\mathcal{T} \to \mathbb{R}^p$ so that $U (\theta, \mathcal{T})$ is a result of $r$-step inner (task-level) gradient descent (GD) minimizing inner loss $\mathcal{L}^{in}$:
\begin{equation}
    U (\theta, \mathcal{T}) = \phi_r,  \quad \forall j \in \{ 1, \dots, r \}: \phi_j = \phi_{j - 1} - \alpha \nabla_{\phi_{j - 1}} \mathcal{L}^{in} (\phi_{j - 1}, \mathcal{T}), \quad \phi_0 = \theta , \label{eq:maml}
\end{equation}
where $\alpha > 0$ is a GD step size and $\nabla$ is denoted as the gradient. Then the BLO problem is defined as finding an initialization $\theta \in \mathbb{R}^p$ to $U (\theta, \cdot)$ which minimizes the expected outer loss $\mathcal{L}^{out}$:
\begin{equation}
    \min_\theta  \mathcal{M} (\theta) \> \>  \text{where} \> \> \mathcal{M}(\theta) = \mathbb{E}_{p (\mathcal{T})} \left[\mathcal{L}^{out} (U (\theta, \mathcal{T}), \mathcal{T}) \right] \label{eq:opt}
\end{equation}
The formulations (\ref{eq:maml}-\ref{eq:opt}) unify various application scenarios. By splitting vector $\phi = [ \phi_1^\top \, \phi_2^\top ]^\top$ where $\phi_1 \in \mathbb{R}^u$, $u < p$, and assuming that $\mathcal{L}^{in} (\phi, \cdot), \mathcal{L}^{out} (\phi, \cdot)$ only depend on $\phi_1$ and $\phi_2$ respectively, we can think of $\phi_1$ as model parameters and (\ref{eq:maml}) as a training loop, while $\phi_2$ are hyperparameters optimized in the outer loop -- a scenario matching the hyperparameter optimization paradigm \cite{hyperopt,hyperoptimpl,truncated,revopt}. Alternatively, $\phi_2$ may act as parameters encoding a neural-network's topology which corresponds to differentiable neural architecture search \cite{darts}. Here $\mathcal{T}$ encodes a minibatch drawn from a dataset. The adversarial robustness problem fits into (\ref{eq:maml}-\ref{eq:opt}) by assuming that $\phi_1$ are learned model parameters and $\phi_2$ is an input perturbation optimized through (\ref{eq:maml}) to decrease the model's performance. See Section \ref{sec:fewshot} for a formalization of few-shot meta-learning expressed as BLO (\ref{eq:maml}-\ref{eq:opt}).

We consider mini-batch gradient descent \cite{bottou} as a solver for (\ref{eq:opt}) -- see Algorithm \ref{alg:sgd}. Let $\mathbf{0}_p$ denote a vector of $p$ zeros. Furthermore $\mathcal{G} (\theta, \mathcal{T})$ is either an exact gradient $\nabla_\theta \mathcal{L}^{out} (U (\theta, \mathcal{T}), \mathcal{T})$ or its approximation and $\{ \gamma_k > 0 \}_{k = 1}^\infty$ denotes a sequence of outer-loop step sizes which satisfies
\begin{equation} \label{eq:step}
    \sum_{k = 1}^\infty \gamma_k = \infty, \quad \lim_{k \to \infty} \gamma_k = 0.
\end{equation}

Assuming that the procedure for sampling tasks $\mathcal{T} \sim p(\mathcal{T})$ takes negligible resources, the time and memory requirements of Algorithm \ref{alg:sgd} are dominated by the time spent and space allocated for computing $\mathcal{G} (\theta, \mathcal{T})$. Therefore, in our subsequent derivations, we analyse the computational complexity of finding the gradient estimate $\mathcal{G} (\theta, \mathcal{T})$.

\subsection{Exact BLO gradients} \label{sec:maml}

Outer mini-batch GD requires the computation or approximation of a gradient $\nabla_\theta \mathcal{L}^{out} (U (\theta, \mathcal{T}), \mathcal{T}) = \nabla_{\phi_0} \mathcal{L}^{out} (\phi_r, \mathcal{T})$. We apply the chain rule to the inner GD (\ref{eq:maml}) and deduce that for $j \in \{ 1, \dots, r \}$:
\begin{equation}
    \nabla_{\phi_{j - 1}} \mathcal{L}^{out} (\phi_r, \mathcal{T}) \!=\! \frac{\partial \phi_j}{\partial \phi_{j - 1}}^\top \!\nabla_{\phi_j} \mathcal{L}^{out} (\phi_r, \mathcal{T}) = ( I \!- \alpha \nabla^2_{\phi_{j - 1}} \mathcal{L}^{in} (\phi_{j - 1}, \mathcal{T})^\top ) \nabla_{\phi_j} \mathcal{L}^{out} (\phi_r, \mathcal{T}) , \label{eq:mamlgrad}
\end{equation}
where $\frac{\partial \phi_j}{\partial \phi_{j - 1}}$ denotes a $p \times p$ Jacobian matrix of $\phi_j$ with respect to $\phi_{j - 1}$, $\nabla^2$ is a $p \times p$ Hessian matrix, and $I$ is the identity matrix of appropriate size. Based on (\ref{eq:maml},\ref{eq:mamlgrad}), Algorithm \ref{alg:maml} illustrates the inner computation for exact BLO, which, together with Algorithm \ref{alg:sgd}, outlines the training procedure.

In standard deep learning applications, $\mathcal{L}^{in} (\phi, \mathcal{T}), \mathcal{L}^{out} (\phi, \mathcal{T})$ are computed by explicitly evaluating a computation graph. Therefore, automatic differentiation (implemented in Tensorflow \cite{tensorflow} and PyTorch \cite{pytorch}) allows computation of $\nabla_\phi \mathcal{L}^{in} (\phi, \mathcal{T}), \nabla_\phi \mathcal{L}^{out} (\phi, \mathcal{T})$ in time and memory only a constant factor bigger than needed to evaluate $\mathcal{L}^{in} (\phi, \mathcal{T}), \mathcal{L}^{out} (\phi, \mathcal{T})$ respectively (the \textit{Cheap Gradient Principle} \cite{autograd}). The same is true for Hessian-vector products $\nabla^2_\phi \mathcal{L}^{in} (\phi, \mathcal{T})^\top b$ which, for any $b \in \mathbb{R}^p$, can be computed using the reverse accumulation technique \cite{hessian} without explicitly constructing the Hessian matrix $\nabla^2_\phi \mathcal{L}^{in} (\phi, \mathcal{T})$. This technique consists of evaluation and automatic differentiation of a functional $H: \mathbb{R}^p \to \mathbb{R}$, $H(\phi) = \nabla_\phi \mathcal{L}^{in} (\phi, \mathcal{T})^\top b$. The result of differentiation is precisely $\nabla_\phi H(\phi) = \nabla^2_\phi \mathcal{L}^{in} (\phi, \mathcal{T})^\top b$. For simplicity we omit formal definitions and derivations which can be found in the dedicated literature \cite{hessian,autograd} and hereafter make the following assumption:
\begin{assumption} \label{as:compunit}
    Let $C_\mathrm{T}, C_\mathrm{M}$ denote an upper bound on the time and memory respectively required to evaluate $\mathcal{L}^{in} (\phi, \mathcal{T})$ and $\mathcal{L}^{out} (\phi, \mathcal{T})$ for any $\phi \in \mathbb{R}^p, \mathcal{T} \in \Omega_\mathcal{T}$. Then the time and memory required to compute $\nabla_\phi \mathcal{L}^{in} (\phi, \mathcal{T})$, $\nabla_\phi \mathcal{L}^{out} (\phi, \mathcal{T})$, $\nabla^2_\phi \mathcal{L}^{in} (\phi, \mathcal{D})^\top b$ for any $\phi, b \in \mathbb{R}^p, \mathcal{T} \in \Omega_\mathcal{T}$ are upper bounded by $C_\mathrm{T}, C_\mathrm{M}$ respectively, multiplied by a universal constant.
\end{assumption}
The following theorem follows naturally from analysis of Algorithm \ref{alg:maml}:
\begin{theorem} \label{th:mamltm}
    Under Assumption \ref{as:compunit}, the time and memory complexities of Algorithm \ref{alg:maml} are $O(r \cdot p + r \cdot C_\mathrm{T})$ and $O(r \cdot p + C_\mathrm{M})$ respectively.
\end{theorem}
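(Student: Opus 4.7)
The plan is to carry out a straightforward operation count on Algorithm \ref{alg:maml}, applying Assumption \ref{as:compunit} line by line.

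First I would unpack the algorithm into its three phases: (i) the forward sweep of $r$ inner GD steps, each of which writes the current $\phi \in \mathbb{R}^p$ into the checkpoint array $\textrm{Arr}[j]$ and then evaluates $\nabla_\phi \mathcal{L}^{in}(\phi,\mathcal{T})$; (ii) the single evaluation of $b := \nabla_\phi \mathcal{L}^{out}(\phi,\mathcal{T})$; and (iii) the backward sweep of $r$ Hessian-vector-product updates $b := b - \alpha \nabla^2_\phi \mathcal{L}^{in}(\phi,\mathcal{T})^\top b$. For the time bound, each forward step costs $O(p)$ for the copy into $\textrm{Arr}[j]$ and the vector update, plus $O(C_\mathrm{T})$ for the gradient by Assumption \ref{as:compunit}. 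Each backward step similarly costs $O(p)$ for reading from $\textrm{Arr}[j]$ and performing the linear combination, plus $O(C_\mathrm{T})$ for the Hessian-vector product, again by Assumption \ref{as:compunit}. Summing gives $O(r \cdot p + r \cdot C_\mathrm{T})$, absorbing the single outer-gradient evaluation into the constant.

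For the memory bound, I would separate persistent storage from scratch storage. The only data that must survive across inner iterations are the checkpoint array $\textrm{Arr}[1..r]$, contributing $O(r \cdot p)$, together with the $O(p)$ vectors $\phi$ and $b$. The evaluations of $\nabla_\phi \mathcal{L}^{in}$, $\nabla_\phi \mathcal{L}^{out}$ and the Hessian-vector product each use $O(C_\mathrm{M})$ auxiliary memory by Assumption \ref{as:compunit}, but this scratch space can be freed and reused between iterations since only the $O(p)$ output vector of each subroutine needs to be retained. Hence the working memory is bounded by $O(r \cdot p + C_\mathrm{M})$.

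The only subtlety worth emphasizing is exactly this reuse argument for the scratch space: a naive reading might charge $r \cdot C_\mathrm{M}$ rather than $C_\mathrm{M}$. Once one observes that the automatic-differentiation buffers are local to each subroutine call and that only an $O(p)$ result is kept, both bounds follow immediately, yielding the theorem.
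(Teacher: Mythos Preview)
Your proposal is correct and is precisely the line-by-line operation count that the paper has in mind; the paper itself does not spell out a proof, merely stating that the theorem ``follows naturally from analysis of Algorithm~\ref{alg:maml}.'' Your explicit remark about reusing the $O(C_\mathrm{M})$ scratch space across iterations is the one point worth making, and you have made it.
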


\subsection{First-order approximation}

$O(r \cdot p + C_M)$ memory complexity is a limitation which can significantly complicate application of BLO in real-life scenarios when both the number of parameters $p$ and the number of gradient descent iterations $r$ are large. A number of improvements have been proposed in the literature to overcome this issue \cite{truncated,revnet,hyperoptimpl,imaml}. A simple method which sometimes performs well in practice is \textit{first-order} BLO (FO-BLO) \cite{maml,darts,advrobbilevel,advrobgraphs}; this proposes to approximate $\nabla_\theta \mathcal{L}^{out} (U (\theta, \mathcal{T}), \mathcal{T})$ with $\nabla_{\phi_r} \mathcal{L}^{out} (\phi_r, \mathcal{T})$ corresponding to zeroing out Hessians in Equation (\ref{eq:mamlgrad}). Since only the last-step gradient is important, there is no need to store states $\phi_1, \dots, \phi_r$ -- see Algorithm \ref{alg:fomaml}. The time and memory complexities of FO-BLO are formalized as follows:
\begin{theorem} \label{th:fomamltm}
    Under Assumption \ref{as:compunit}, the time and memory complexities of Algorithm \ref{alg:fomaml} are $O(r \cdot p + r \cdot C_\mathrm{T})$ and $O(p + C_\mathrm{M})$ respectively.
\end{theorem}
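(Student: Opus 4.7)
The plan is to carry out a line-by-line accounting of Algorithm \ref{alg:fomaml}, applying Assumption \ref{as:compunit} to the gradient calls and treating the explicit vector operations separately.

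First I would handle the time complexity. The initialization $\phi := \theta$ costs $O(p)$. The main loop runs $r$ iterations; each iteration performs one gradient evaluation $\nabla_\phi \mathcal{L}^{in}(\phi, \mathcal{T})$, bounded by $O(C_\mathrm{T})$ via Assumption \ref{as:compunit}, followed by a scalar-times-vector subtraction on $\mathbb{R}^p$, costing $O(p)$. Summing over the loop gives $O(r \cdot p + r \cdot C_\mathrm{T})$. The final $\nabla_\phi \mathcal{L}^{out}(\phi, \mathcal{T})$ adds another $O(C_\mathrm{T})$, which is absorbed. This yields the claimed $O(r \cdot p + r \cdot C_\mathrm{T})$ total.

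Next I would analyze memory. The only persistent state across iterations is the single vector $\phi \in \mathbb{R}^p$, taking $O(p)$ memory; crucially, the update $\phi := \phi - \alpha \nabla_\phi \mathcal{L}^{in}(\phi, \mathcal{T})$ is performed in place, so previous iterates $\phi_1, \ldots, \phi_{j-1}$ are never retained. During a single iteration, computing $\nabla_\phi \mathcal{L}^{in}(\phi, \mathcal{T})$ consumes at most $O(C_\mathrm{M})$ working memory by Assumption \ref{as:compunit}, and likewise for the terminal $\nabla_\phi \mathcal{L}^{out}$ call. The main obstacle, and the only subtle point compared to the time argument, is justifying that this working memory is released between iterations so that the $O(C_\mathrm{M})$ term does not accumulate a factor of $r$; this follows from the fact that the autodiff tape for each $\nabla_\phi \mathcal{L}^{in}$ call is independent of all prior and subsequent calls (there is no backward pass across iterations, as Hessian terms are zeroed out by construction), so its tape can be discarded immediately after the gradient is read off. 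Adding the persistent and transient contributions gives $O(p + C_\mathrm{M})$, completing the proof.
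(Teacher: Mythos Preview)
Your proposal is correct and follows the same line-by-line accounting that the paper implicitly relies on; in fact the paper does not write out a proof for this theorem at all, treating it (like Theorem~\ref{th:mamltm}) as following naturally from inspection of the algorithm. Your argument simply makes that inspection explicit, including the key observation that no cross-iteration backward pass is needed, so the $O(C_\mathrm{M})$ working memory is reused rather than accumulated.
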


Although FO-BLO enjoys a better memory complexity and can sometimes perform well in practice, it can fail to converge to a stationary point of the BLO objective (\ref{eq:opt}) (see Section \ref{sec:theory}).

\subsection{Few-shot meta-learning: example of BLO problem} \label{sec:fewshot}

\textit{Few-shot meta-learning} is a celebrated example \cite{maml} of a BLO problem. It addresses adaptation to a new task when supplied with a small amount of training data. Define $\mathcal{X} = \mathbb{R}^n, \mathcal{Y} = \mathbb{R}^m$ as the observation and prediction domains respectively. Each task $\mathcal{T}$ is a pair defined as:
\begin{gather*}
    \mathcal{T} = ( \mathcal{D}^{tr}_\mathcal{T}, \mathcal{D}^{test}_\mathcal{T} ), \mathcal{D}^{tr}_\mathcal{T} = ( ( X^{tr}_i, Y^{tr}_i ) )_{i = 1}^s \in (\mathcal{X} \times \mathcal{Y})^s, \mathcal{D}^{test}_\mathcal{T} = ( ( X^{test}_i, Y^{test}_i ) )_{i = 1}^t \in (\mathcal{X} \times \mathcal{Y})^t ,
\end{gather*}
where $\mathcal{D}^{tr}_\mathcal{T}$ is a training set of a typically small size $s$ (number of \textit{shots}), and $\mathcal{D}^{test}_\mathcal{T}$ is a test set of size $t$. Therefore, $\Omega_\mathcal{T} = ((\mathcal{X} \times \mathcal{Y})^s) \times ( (\mathcal{X} \times \mathcal{Y})^t)$.

We consider an $m$-class few-shot classification problem in particular. That is, $p(\mathcal{T})$ is nonzero only when the corresponding $Y_i^{tr}$ are class one-hot encodings with a single nonzero entry of $1$ encoding the class. Let $g: \mathbb{R}^p \times \mathcal{X} \to \mathcal{Y}$, be an \textit{estimator} (e.g. a feed-forward or a convolutional neural network) with parameters $\phi$ and input $X$. $g (\phi, \cdot)$ outputs label logits which are fed into \textit{categorical cross-entropy loss} $l_\mathrm{CCE} (Z, Y) = \log ( \sum_{c = 1}^m \exp (Z_c) ) - Z^\top Y$.

Define $\mathcal{L}_\mathrm{CCE} (\phi, \mathcal{D}) = \frac{1}{| \mathcal{D} |} \sum_{(X, Y) \in \mathcal{D}} l_\mathrm{CCE} (g(\phi, X), Y)$. \textit{Model-agnostic meta-learning} (MAML) \cite{maml} states the problem of few-shot classification as BLO (\ref{eq:maml}-\ref{eq:opt}) where $\mathcal{L}^{in} (\phi, \mathcal{T}) = \mathcal{L}_\mathrm{CCE} (\phi, \mathcal{D}_\mathcal{T}^{tr})$ and $\mathcal{L}^{out} (\phi, \mathcal{T}) = \mathcal{L}_\mathrm{CCE} (\phi, \mathcal{D}_\mathcal{T}^{test})$. This way, inner GD corresponds to fitting $g (\phi, \cdot)$ to a training set $\mathcal{D}_\mathcal{T}^{tr}$ of a small size, while the outer mini-batch GD is searching for an initialization $\theta = \phi_0$ maximizing generalization on the unseen data $\mathcal{D}_\mathcal{T}^{test}$.

\begin{example}
$C_\mathrm{T} = O( \max (s, t) \cdot p ), C_\mathrm{M} = O(p)$ for the definition of $\Omega_\mathcal{T}$, $\mathcal{L}^{in} (\phi, \mathcal{T}), \mathcal{L}^{out} (\phi, \mathcal{T})$ as above. See Appendix \ref{sec:ffn} for further discussion.
\end{example}
\section{Unbiased first-order bilevel optimization (UFO-BLO)} \label{sec:prop}

An alternative way to compute $\nabla_\theta \mathcal{L}^{out} (U (\theta, \mathcal{T}), \mathcal{T})$ without storing the array of inner-GD intermediate states $\phi_1, \dots, \phi_r$ is illustrated in Algorithm \ref{alg:memmaml}, where each $\phi_j$, $1 \leq j \leq r$, is recomputed when needed, using a nested loop inside a backward pass. Hence, memory efficiency comes at the cost of quadratic running time complexity. Algorithm \ref{alg:memmaml} alone, however, does not give a practical way to solve the optimization problem (\ref{eq:opt}). Instead, we show how to combine Algorithm \ref{alg:memmaml} with FO-BLO into a randomized scheme with tractable complexity bounds and convergence guarantees.

Let $\xi \in \{ 0, 1 \}$ be a Bernoulli random variable with $\mathbb{P} (\xi = 1) = q$ (denote as $\xi \sim \mathrm{Bernoulli} (q)$) where $q \in (0, 1]$. Recall that $\mathcal{G}_{FO}$ is the first-order gradient from Algorithm \ref{alg:fomaml}. We consider the following stochastic approximation to $\nabla_\theta \mathcal{L}^{out} (U (\theta, \mathcal{T}), \mathcal{T})$:
\begin{equation}
    \mathcal{G}_{UFO} (\theta, \mathcal{T}) = \mathcal{G}_{FO} (\theta, \mathcal{T}) + (\xi / q) (\nabla_\theta \mathcal{L}^{out} (U (\theta, \mathcal{T}), \mathcal{T}) - \mathcal{G}_{FO} (\theta, \mathcal{T})) . \label{eq:unb}
\end{equation}
In fact, (\ref{eq:unb}) is an unbiased estimate of $\nabla_\theta \mathcal{L} (U (\theta, \mathcal{T}), \mathcal{T})$. Indeed, since $\mathbb{E}_\xi \left[\xi\right] = q$:
\begin{equation*}
    \mathbb{E}_\xi \left[\mathcal{G}_{UFO} (\theta, \mathcal{T}) \right] = (1 - q / q) \mathcal{G}_{FO} (\theta, \mathcal{T}) + (q / q) \nabla_\theta \mathcal{L}^{out} (U (\theta, \mathcal{T}), \mathcal{T}) = \nabla_\theta \mathcal{L}^{out} (U (\theta, \mathcal{T}), \mathcal{T}) .
\end{equation*}
For this reason we call the estimate (\ref{eq:unb}) \textit{unbiased first-order} BLO (UFO-BLO). Algorithm \ref{alg:unbiased} illustrates randomized computation of UFO-BLO. It combines FO-BLO (Algorithm \ref{alg:fomaml}) and memory-efficient BLO (Algorithm \ref{alg:memmaml}) and is therefore also memory-efficient. In addition, for certain values of $q$, Algorithm \ref{alg:unbiased} becomes running time efficient:
\begin{theorem} \label{th:ufomamlrm}
    Under Assumption \ref{as:compunit}, the expected running time of Algorithm \ref{alg:unbiased} is $O(r \cdot p + r \cdot C_\mathrm{T} + q \cdot r^2 \cdot p + q \cdot r^2 \cdot C_\mathrm{T})$ while memory complexity is $O(p + C_\mathrm{M})$.
\end{theorem}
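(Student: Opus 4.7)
The plan is to bound the complexity of Algorithm \ref{alg:unbiased} by decomposing its cost into (i) the deterministic call to Algorithm \ref{alg:fomaml} that produces $b$, and (ii) the call to Algorithm \ref{alg:memmaml} that produces $c$, which happens only with probability $q$. From Theorem \ref{th:fomamltm} we already know part (i) costs $O(r \cdot p + r \cdot C_\mathrm{T})$ time and $O(p + C_\mathrm{M})$ memory, so the work reduces to analysing Algorithm \ref{alg:memmaml} and combining the two via linearity of expectation.

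First I would account for Algorithm \ref{alg:memmaml}. Its forward loop runs $r$ iterations, each computing one inner-loss gradient in $O(p + C_\mathrm{T})$ time (Assumption \ref{as:compunit}), and stores nothing that scales with $r$; the only state that persists across iterations is $\phi \in \mathbb{R}^p$, contributing $O(p + C_\mathrm{M})$ memory. The backward phase runs an outer loop indexed by $j_1 = r, \dots, 1$. For each $j_1$, it (a) rebuilds $\phi$ from $\theta$ by $j_1 - 1$ inner-gradient steps, and (b) performs one Hessian-vector product $\nabla^2_\phi \mathcal{L}^{in}(\phi, \mathcal{T})^\top b$, again $O(p + C_\mathrm{T})$ by Assumption \ref{as:compunit}. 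Summing the recomputation cost gives $\sum_{j_1 = 1}^{r} (j_1 - 1) \cdot O(p + C_\mathrm{T}) = O(r^2 \cdot p + r^2 \cdot C_\mathrm{T})$, which dominates the $O(r \cdot p + r \cdot C_\mathrm{T})$ cost of the Hessian-vector products. No loop iteration requires storing more than the current $\phi$, the running gradient $b$, and the constant-size workspace for a single automatic-differentiation evaluation, so the memory stays at $O(p + C_\mathrm{M})$.

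Next I would assemble Algorithm \ref{alg:unbiased}. Let $T_{\mathrm{FO}}$ and $T_{\mathrm{MEM}}$ be (deterministic upper bounds on) the running times of Algorithms \ref{alg:fomaml} and \ref{alg:memmaml}. The algorithm always pays $T_{\mathrm{FO}}$, pays $O(1)$ to draw $\xi$, and pays an additional $T_{\mathrm{MEM}} + O(p)$ only on the event $\{\xi = 1\}$, which has probability $q$. Taking expectation over $\xi$,
\begin{equation*}
\mathbb{E}[\text{time}] = T_{\mathrm{FO}} + q \cdot T_{\mathrm{MEM}} + O(p) = O\bigl(r \cdot p + r \cdot C_\mathrm{T} + q \cdot r^2 \cdot p + q \cdot r^2 \cdot C_\mathrm{T}\bigr),
\end{equation*}
as claimed. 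For memory, the two subroutines are called sequentially and their scratch space can be reused, and the returned vector $b + q^{-1}(c - b)$ occupies $O(p)$; thus the peak memory is the maximum of the two subroutine costs, which is $O(p + C_\mathrm{M})$.

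The main (and only) subtle point is the $O(r^2)$ bookkeeping for Algorithm \ref{alg:memmaml}: one must verify that the arithmetic-sum $\sum_{j_1=1}^{r}(j_1-1)$ correctly captures the recomputation cost and that the per-iteration primitives (gradient and Hessian-vector product) can indeed be executed inside $O(p + C_\mathrm{M})$ memory, which is exactly what Assumption \ref{as:compunit} and the reverse-accumulation trick for Hessian-vector products provide. Everything else is a direct invocation of Theorem \ref{th:fomamltm} and linearity of expectation.
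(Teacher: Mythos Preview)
Your proposal is correct and follows essentially the same approach as the paper: write the running time as a randomized bound $O(r p + r C_\mathrm{T} + \xi \cdot r^2 p + \xi \cdot r^2 C_\mathrm{T})$ and take expectation, with the memory bound inherited from the subroutines. The paper's proof is terser (it asserts the $O(r^2)$ cost of Algorithm~\ref{alg:memmaml} without spelling out the $\sum_{j_1=1}^{r}(j_1-1)$ recomputation count), but your more detailed bookkeeping is a faithful expansion of exactly the same argument.
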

\begin{proof}
    Memory complexity follows naturally from the algorithm's definition. The running time of the algorithm satisfies a randomized upper bound $O(r \cdot p + r \cdot C_\mathrm{T} + \xi \cdot r^2 \cdot p + \xi \cdot r^2 \cdot C_\mathrm{T})$. The theorem is obtained by taking expectation of the running time and its upper bound.
\end{proof}

\begin{corollary} \label{cor:ufomamltm}
    Let $q = \frac{C}{r}$ where $0 < C \leq r$ is a universal constant. Then under Assumption \ref{as:compunit}, the expected running time of Algorithm \ref{alg:unbiased} is $O(r \cdot p + r \cdot C_\mathrm{T})$ while memory complexity is $O(p + C_\mathrm{M})$.
\end{corollary}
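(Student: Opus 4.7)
The plan is a direct substitution into the bound already established in Theorem \ref{th:ufomamlrm}. First I would observe that the memory complexity claim requires no work at all: the bound $O(p + C_\mathrm{M})$ from Theorem \ref{th:ufomamlrm} does not depend on $q$, so it transfers verbatim to this corollary regardless of the choice $q = C/r$.

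For the expected running time, I would start from the bound $O(r \cdot p + r \cdot C_\mathrm{T} + q \cdot r^2 \cdot p + q \cdot r^2 \cdot C_\mathrm{T})$ guaranteed by Theorem \ref{th:ufomamlrm} and substitute $q = C/r$. The third term becomes $(C/r) \cdot r^2 \cdot p = C \cdot r \cdot p$, and the fourth term becomes $(C/r) \cdot r^2 \cdot C_\mathrm{T} = C \cdot r \cdot C_\mathrm{T}$. Since $C$ is a universal constant (independent of $r$, $p$, $C_\mathrm{T}$, and $C_\mathrm{M}$), both of these terms are absorbed into $O(r \cdot p)$ and $O(r \cdot C_\mathrm{T})$ respectively, yielding the advertised expected time $O(r \cdot p + r \cdot C_\mathrm{T})$.

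There is no meaningful obstacle here, since the result is essentially a one-line calculation off of the preceding theorem; the only thing to be explicit about is that the constraint $0 < C \leq r$ ensures $q \in (0, 1]$ and so the hypotheses of Theorem \ref{th:ufomamlrm} and of Algorithm \ref{alg:unbiased} are satisfied, letting the substitution go through legitimately.
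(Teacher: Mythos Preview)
Your proposal is correct and matches the paper's approach exactly: the paper treats the corollary as an immediate consequence of Theorem~\ref{th:ufomamlrm} without giving a separate proof, and your substitution $q = C/r$ into the expected-time bound is precisely the intended one-line argument. Your remark that $0 < C \leq r$ ensures $q \in (0,1]$ is a nice explicit check the paper leaves implicit.
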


By the law of large numbers \cite{prob}, the time complexity of UFO-BLO approaches its expected value when $\tau$, the number of outer iterations, is large ($\tau \gg r$), which is typical for large-scale problems.
\section{Convergence results} \label{sec:theory}

In this section we first provide convergence guarantees for UFO-BLO under a set of broad, nonconvex assumptions (Theorem \ref{th:conv}). We analyse UFO-BLO as an algorithm which finds a stationary point of the BLO objective (\ref{eq:maml}-\ref{eq:opt}), i.e. a point $\theta^* \in \mathbb{R}^p$ such that $\nabla_{\theta^*} \mathcal{M} (\theta^*) = \mathbf{0}_p$. Motivated by stationary point $\theta^*$ search, we prove a standard result for stochastic optimization of nonconvex functions \cite[Section 4.3]{bottou}, that is
\begin{equation} \label{eq:liminf}
    \liminf_{k \to \infty} \E{ \| \nabla_{\theta_k} \mathcal{M} (\theta_k) \|_2^2} = 0 ,
\end{equation}
where $\theta_k$ are iterates of mini-batch GD with UFO-BLO gradient estimation. Intuitively, equation (\ref{eq:liminf}) implies that there exist iterates of UFO-BLO which approach some stationary point $\theta^*$ up to any level of proximity. Our second contribution is a rigorous proof that equation \textit{(\ref{eq:liminf}) does not hold for FO-BLO under the same assumptions} (Theorem \ref{th:counter}). More specifically, we show that for any $D > 0$, there exists an optimization problem of type (\ref{eq:opt}) such that $\liminf_{k \to \infty} \E{\| \nabla_{\theta_k} \mathcal{M} (\theta_k) \|_2^2} > D$ where $\{ \theta_k \}$ are iterates of FO-BLO. The intuition behind this result is that FO-BLO cannot find a solution with gradient norm lower than $D$. We first formulate Assumptions \ref{as:liphes}, \ref{as:mreg} which we use for proofs.

\begin{assumption}[Uniformly bounded, uniformly Lipschitz-continuous gradients and Hessians] \label{as:liphes}
    For any $\mathcal{T} \in \Omega_\mathcal{T}$, $\mathcal{L}^{in} (\phi, \mathcal{T}), \mathcal{L}^{out} (\phi, \mathcal{T})$ are twice differentiable as functions of $\phi$. There exist constants $L_1, L_2, L_3 > 0$ such that for any $\mathcal{T} \in \Omega_\mathcal{T}, \phi, \psi \in \mathbb{R}^p$, it holds that
    \begin{equation*}
        \| \nabla_\phi \mathcal{L}^{out} (\phi, \mathcal{T}) \|_2 \leq L_1, \quad \| \nabla^2_\phi \mathcal{L}^{in} (\phi, \mathcal{T}) -  \nabla^2_\psi \mathcal{L}^{in} (\psi, \mathcal{T}) \|_2 \leq L_3 \| \phi - \psi \|_2
    \end{equation*}
    and for $\square \!\in\! \{ in, out \}$, $\| \nabla^2_\phi \mathcal{L}^\square (\phi, \mathcal{T}) \|_2 \leq L_2$.
\end{assumption}
Observe that the following assumption is satisfied in particular when $p (\mathcal{T})$ is defined on a finite set of tasks $\mathcal{T}$ (e.g. when the meta-dataset is finite) and $\mathcal{L}$ is lower-bounded.
\begin{assumption}[Regularity of $\mathcal{M}$] \label{as:mreg}
    For each $\theta \in \mathbb{R}^p$, the terms $\mathcal{M}(\theta)$, $\nabla_\theta \mathcal{M} (\theta)$, $\mathbb{E}_{p (\mathcal{T})} \left[\nabla_\theta \mathcal{L}^{out} (U (\theta, \mathcal{T}), \mathcal{T}) \right]$ are well-defined and $\nabla_\theta \mathcal{M} (\theta) = \mathbb{E}_{p (\mathcal{T})} \left[\nabla_\theta \mathcal{L}^{out} (U (\theta, \mathcal{T}), \mathcal{T})\right]$. Let $\mathcal{M}^* = \inf_{\theta \in \mathbb{R}^p} \mathcal{M} (\theta)$, then $\mathcal{M}^* > - \infty$.
\end{assumption}

Below we formulate theoretical results (Theorems \ref{th:conv}, \ref{th:counter}) which are proved in Appendix \ref{sec:proofs}. Note that as a special case of Theorem \ref{th:conv}, we obtain a convergence proof for BLO with exact gradients (Algorithm \ref{alg:maml}). Indeed, one may simply set $q = 1$ in the statement of the theorem.

\begin{theorem}[Convergence of UFO-BLO] \label{th:conv}
Let $p, r, v \in \mathbb{N}, \alpha > 0, q \in (0, 1], \theta_0 \in \mathbb{R}^p$, $p(\mathcal{T})$ be a distribution on a nonempty set $\Omega_\mathcal{T}$, $\{ \gamma_k > 0 \}_{k = 1}^\infty$ be any sequence, $\mathcal{L}^{in}, \mathcal{L}^{out} : \mathbb{R}^p \times \Omega_\mathcal{T} \to \mathbb{R}$ be functions satisfying Assumption \ref{as:liphes}, and let $U: \mathbb{R}^p \times \Omega_\mathcal{T} \to \mathbb{R}^p$ be defined according to (\ref{eq:maml}), $\mathcal{M} : \mathbb{R}^p \to \mathbb{R}$ be defined by (\ref{eq:opt}) and satisfy Assumption \ref{as:mreg}. Define $\mathcal{G}: \mathbb{R}^p \times \Omega_\mathcal{T} \times \{ 0, 1 \} \to \mathbb{R}^p$ as
\begin{equation*}
    \mathcal{G} (\theta, \mathcal{T}, x) = \nabla_{\phi_r} \mathcal{L}^{out} (\phi_r, \mathcal{T}) + (x / q) (\nabla_\theta \mathcal{L}^{out} (\phi_r, \mathcal{T}) - \nabla_{\phi_r} \mathcal{L}^{out} (\phi_r, \mathcal{T})) ,
\end{equation*}
where $\phi_r = U(\theta, \mathcal{T})$. Let $\{ \mathcal{T}_{k,w} \}, \{ \xi_{k,w} \}, w \in \{ 1, \dots, v \}, k \in \mathbb{N}$ be sets of i.i.d. samples from $p(\mathcal{T})$ and $\mathrm{Bernoulli} (q)$ respectively, such that $\sigma$-algebras populated by $\{ \mathcal{T}_{k,w} \}_{\forall k, w}, \{ \xi_{k,w} \}_{\forall k, w}$ are independent.
Let $\{ \theta_k \in \mathbb{R}^p \}_{k = 0}^\infty$ be a sequence where for all $k \in \mathbb{N}$ $\theta_k = \theta_{k - 1} - \frac{\gamma_k}{v} \sum_{w = 1}^v \mathcal{G} (\theta_{k - 1}, \mathcal{T}_{k,w}, \xi_{k,w})$. Then it holds that
\begin{enumerate}
    \item If $\{ \gamma_k \}_{k = 1}^\infty$ satisfies (\ref{eq:step}) and $\sum_{k = 1}^\infty \gamma_k^2 < \infty$, then $\liminf_{k \to \infty} \E{\| \nabla_{\theta_k} \mathcal{M} (\theta_k) \|_2^2} = 0$;
    \item If $\forall k \in \mathbb{N}: \gamma_k = k^{-0.5}$, then $\min_{0 \leq u < k} \E{ \| \nabla_{\theta_u} \mathcal{M} (\theta_u) \|_2^2 }= o (k^{-0.5 + \epsilon})$ for any $\epsilon > 0$.
\end{enumerate}
\end{theorem}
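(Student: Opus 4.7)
The plan is to frame the UFO-BLO iteration as stochastic gradient descent on the nonconvex objective $\mathcal{M}$ with an unbiased estimator of bounded variance, and then invoke the classical Robbins-Monro style argument for nonconvex SGD. Setting $g_k = \frac{1}{v}\sum_{w=1}^v \mathcal{G}(\theta_{k-1},\mathcal{T}_{k,w},\xi_{k,w})$, the mutual independence of $\{\mathcal{T}_{k,w}\}$ and $\{\xi_{k,w}\}$ together with $\mathbb{E}_\xi[\xi/q]=1$ gives $\mathbb{E}[g_k\mid\theta_{k-1}] = \mathbb{E}_{p(\mathcal{T})}[\nabla_\theta \mathcal{L}^{out}(U(\theta_{k-1},\mathcal{T}),\mathcal{T})]$, which by Assumption \ref{as:mreg} equals $\nabla_{\theta_{k-1}}\mathcal{M}(\theta_{k-1})$. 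So the iteration is an unbiased stochastic gradient scheme on $\mathcal{M}$.

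The two quantitative ingredients I would extract from Assumption \ref{as:liphes} are (i) an $L_\mathcal{M}$-smoothness constant for $\nabla \mathcal{M}$ and (ii) a uniform variance bound $\sigma^2$ on $g_k$. Writing the Jacobian of an inner step as $J_j = I - \alpha \nabla^2_{\phi_{j-1}}\mathcal{L}^{in}(\phi_{j-1},\mathcal{T})$, I get $\|J_j\|_2 \le 1+\alpha L_2$, hence $\|\partial U/\partial\theta\|_2 \le (1+\alpha L_2)^r$ uniformly. Lipschitz continuity of $\theta \mapsto \nabla_\theta \mathcal{L}^{out}(U(\theta,\mathcal{T}),\mathcal{T}) = (\partial U/\partial\theta)^\top \nabla_\phi \mathcal{L}^{out}(U(\theta,\mathcal{T}),\mathcal{T})$ then follows by expanding the product of $J_j$'s and using $L_3$-Lipschitz continuity of $\nabla^2 \mathcal{L}^{in}$ together with the bounds $\|\nabla_\phi \mathcal{L}^{out}\|_2 \le L_1$ and $\|\nabla^2 \mathcal{L}^{out}\|_2 \le L_2$; passing under the expectation (justified by Assumption \ref{as:mreg}) gives $L_\mathcal{M}$. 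For variance, $\mathcal{G}(\theta,\mathcal{T},\xi) - \nabla_\theta \mathcal{L}^{out}(U,\mathcal{T}) = (\xi/q - 1)\bigl(\nabla_\theta \mathcal{L}^{out}(U,\mathcal{T}) - \nabla_{\phi_r}\mathcal{L}^{out}(\phi_r,\mathcal{T})\bigr)$; the bracketed difference has norm at most $((1+\alpha L_2)^r + 1)L_1$, and $\mathbb{E}[(\xi/q-1)^2] = (1-q)/q$, so a uniform bound $\mathbb{E}\|g_k - \nabla \mathcal{M}(\theta_{k-1})\|_2^2 \le \sigma^2/v$ follows by independence across $w$.

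With these in hand I apply the descent inequality $\mathcal{M}(\theta_k) \le \mathcal{M}(\theta_{k-1}) - \gamma_k \langle \nabla \mathcal{M}(\theta_{k-1}), g_k\rangle + \tfrac{L_\mathcal{M}\gamma_k^2}{2}\|g_k\|_2^2$, take conditional expectations, and use $\mathbb{E}\|g_k\|_2^2 \le \|\nabla \mathcal{M}(\theta_{k-1})\|_2^2 + \sigma^2/v$ to obtain
\begin{equation*}
\mathbb{E}\mathcal{M}(\theta_k) \le \mathbb{E}\mathcal{M}(\theta_{k-1}) - \gamma_k\bigl(1 - \tfrac{L_\mathcal{M}\gamma_k}{2}\bigr)\mathbb{E}\|\nabla \mathcal{M}(\theta_{k-1})\|_2^2 + \tfrac{L_\mathcal{M}\sigma^2\gamma_k^2}{2v}.
\end{equation*}
Since $\gamma_k\to 0$ (or is already small enough), eventually $1 - L_\mathcal{M}\gamma_k/2 \ge 1/2$; telescoping from $1$ to $K$ and using $\mathcal{M}\ge \mathcal{M}^\ast$ yields
\begin{equation*}
\tfrac{1}{2}\sum_{k=1}^K \gamma_k\, \mathbb{E}\|\nabla \mathcal{M}(\theta_{k-1})\|_2^2 \le \mathcal{M}(\theta_0) - \mathcal{M}^\ast + \tfrac{L_\mathcal{M}\sigma^2}{2v}\sum_{k=1}^K \gamma_k^2 + O(1).
\end{equation*}
Part 1 follows by the standard contradiction: if $\liminf_k \mathbb{E}\|\nabla \mathcal{M}(\theta_k)\|_2^2 \ge \delta > 0$, the LHS grows like $\delta \sum \gamma_k = \infty$ while the RHS is finite under $\sum \gamma_k^2 < \infty$. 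For part 2, plugging $\gamma_k = k^{-0.5}$ gives $\sum_{k=1}^K \gamma_k = \Theta(\sqrt{K})$ and $\sum_{k=1}^K \gamma_k^2 = \Theta(\log K)$, and dividing through bounds $\min_{0 \le u < k}\mathbb{E}\|\nabla \mathcal{M}(\theta_u)\|_2^2$ by $O(\log k/\sqrt{k}) = o(k^{-0.5+\epsilon})$.

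The main obstacle is establishing $L_\mathcal{M}$-smoothness of $\mathcal{M}$: unlike textbook SGD where smoothness is assumed directly, here it must be derived by differentiating an $r$-fold composition and tracking how the product $J_r \cdots J_1$ depends on $\theta$ via Assumption \ref{as:liphes}. The constants $L_\mathcal{M}$ and $\sigma^2$ will depend on $r$ through $(1+\alpha L_2)^r$, but this affects only prefactors and not the asymptotic statements.
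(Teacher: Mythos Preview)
Your approach is essentially the same as the paper's: establish Lipschitz continuity of $\nabla\mathcal{M}$ from Assumption \ref{as:liphes} (the paper's Lemma \ref{lemma:mlipsch}), establish unbiasedness and a moment bound on $\mathcal{G}$ (the paper's Lemma \ref{lemma:unbbnd}), apply the descent lemma and telescope (the paper's Lemma \ref{th:convexp}), then read off parts 1 and 2 exactly as you do.

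Two small points of divergence are worth noting. First, your variance computation only accounts for the randomness in $\xi$: the identity $\mathcal{G}-\nabla_\theta\mathcal{L}^{out}(U,\mathcal{T})=(\xi/q-1)(\cdots)$ gives the conditional variance $\mathrm{Var}_\xi(\mathcal{G}\mid\mathcal{T})$, but $\mathbb{E}\|g_k-\nabla\mathcal{M}(\theta_{k-1})\|_2^2$ also contains $\mathrm{Var}_{\mathcal{T}}(\nabla_\theta\mathcal{L}^{out}(U(\theta,\mathcal{T}),\mathcal{T}))$. This is easily patched since $\|\nabla_\theta\mathcal{L}^{out}(U,\mathcal{T})\|_2\le(1+\alpha L_2)^r L_1$ uniformly, but as written the bound does not follow. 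Second, the paper bounds the absolute second moment $\mathbb{E}\|\mathcal{G}\|_2^2$ by a constant (rather than $\|\nabla\mathcal{M}\|_2^2+\sigma^2$), which yields the descent inequality with coefficient $1$ in front of $\gamma_k\mathbb{E}\|\nabla\mathcal{M}(\theta_{k-1})\|_2^2$ instead of $1-\tfrac{L_\mathcal{M}\gamma_k}{2}$. This avoids having to separately handle the initial iterations in part 2 and absorb them into your $O(1)$ term (which itself requires the uniform bound $\|\nabla\mathcal{M}\|_2\le(1+\alpha L_2)^rL_1$ that you did not state). Neither point changes the asymptotics; they just make the paper's version slightly cleaner.
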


\begin{theorem} [Divergence of FO-BLO] \label{th:counter}
Let $p, r, v \in \mathbb{N}, \alpha > 0, \theta_0 \in \mathbb{R}^p$, $\{ \gamma_k > 0 \}_{k = 1}^\infty$ be any sequence satisfying (\ref{eq:step}), and $D$ be any positive number. Then there exists a set $\Omega_\mathcal{T}$ with a distribution $p(\mathcal{T})$ on it and $\mathcal{L}^{in}, \mathcal{L}^{out} : \mathbb{R}^p \times \Omega_\mathcal{D} \to \mathbb{R}$ satisfying Assumption \ref{as:liphes}, such that for $U: \mathbb{R}^p \times \mathcal{T} \to \mathbb{R}^p$ defined according to (\ref{eq:maml}), $\mathcal{M} : \mathbb{R}^p \to \mathbb{R}$ defined according to (\ref{eq:opt}) and satisfying Assumption \ref{as:mreg}, the following holds: define $\mathcal{G}_{FO}: \mathbb{R}^p \times \Omega_\mathcal{T} \to \mathbb{R}^p$ as $\mathcal{G}_{FO} (\theta, \mathcal{T}) = \nabla_{\phi_r} \mathcal{L}^{out} (\phi_r, \mathcal{T})$ where $\phi_r = U(\theta, \mathcal{T})$. Let $\{ \mathcal{T}_{k,w} \}, w \in \{ 1, \dots, v \}, k \in \mathbb{N}$ be a set of i.i.d. samples from $p(\mathcal{T})$. Let $\{ \theta_k \in \mathbb{R}^p \}_{k = 0}^\infty$ be a sequence where for all $k \in \mathbb{N}$, $\theta_k = \theta_{k - 1} - \frac{\gamma_k}{v} \sum_{w = 1}^v \mathcal{G}_{FO} (\theta_{k - 1}, \mathcal{T}_{k,w})$. Then $\liminf_{k \to \infty} \E {\| \nabla_{\theta_k} \mathcal{M} (\theta_k) \|_2^2} > D$.
\end{theorem}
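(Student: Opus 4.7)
The plan is to construct an explicit one-dimensional BLO instance where the expected FO-BLO gradient differs from $\nabla\mathcal{M}$ by a nonzero constant bias, so that $\{\theta_k\}$ drifts deterministically toward $-\infty$ while $\|\nabla\mathcal{M}\|_2^2$ stays bounded below by a constant exceeding $D$. Take $p=1$, $\Omega_\mathcal{T}=\{\mathcal{T}_1,\mathcal{T}_2\}$ with $p(\mathcal{T}_1)=p(\mathcal{T}_2)=1/2$, and
\[
\mathcal{L}^{in}(\phi,\mathcal{T}_1)=\tfrac{\beta}{2}\phi^2, \quad \mathcal{L}^{in}(\phi,\mathcal{T}_2)=0
\]
with $\beta$ chosen so $\rho:=(1-\alpha\beta)^r\in(0,1)$; then $U(\theta,\mathcal{T}_1)=\rho\theta$, $U(\theta,\mathcal{T}_2)=\theta$, and the two Jacobians differ. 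For the outer losses, set
\[
\mathcal{L}^{out}(\phi,\mathcal{T}_1)=A\phi, \quad \mathcal{L}^{out}(\phi,\mathcal{T}_2)=-A\rho\phi+2C\log(1+e^{-\phi}),
\]
with $C>2\sqrt{D}$ and $A>2C/(1-\rho)$. The linear pieces cancel when averaged across tasks, giving $\mathcal{M}(\theta)=C\log(1+e^{-\theta})\geq 0$ (verifying Assumption~\ref{as:mreg}); Assumption~\ref{as:liphes} is immediate from $|\nabla\mathcal{L}^{out}|\leq A+2C$, $\|\nabla^2\mathcal{L}^{out}\|\leq \max(\beta,C/2)$, and the constant inner Hessian (so $L_3=0$ works).

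Direct differentiation gives $\nabla\mathcal{M}(\theta)=-C\sigma(-\theta)$, where $\sigma$ is the sigmoid, and the expected FO-BLO gradient is $\mathbb{E}_\mathcal{T}[\mathcal{G}_{FO}(\theta,\mathcal{T})]=\nabla\mathcal{M}(\theta)+c$ with $c:=A(1-\rho)/2$ independent of $\theta$. The condition $A>2C/(1-\rho)$ forces $c>C\geq\sup_\theta|\nabla\mathcal{M}(\theta)|$, so the expected update is uniformly at least $c-C>0$; iterating yields $\mathbb{E}[\theta_k]\leq\theta_0-(c-C)\sum_{j=1}^k\gamma_j\to-\infty$ by~(\ref{eq:step}). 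Since $|\nabla\mathcal{M}(\theta)|^2\geq C^2/4$ whenever $\theta\leq 0$ (as $\sigma(-\theta)\geq 1/2$ there), the theorem reduces to showing $\mathbb{P}(\theta_k>0)\to 0$.

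For that I would apply Azuma's inequality to the martingale $\tilde M_k:=\sum_{j=1}^k\bigl[(\theta_j-\theta_{j-1})-\mathbb{E}[\theta_j-\theta_{j-1}\mid\mathcal{F}_{j-1}]\bigr]$, whose increments satisfy $|\tilde M_j-\tilde M_{j-1}|\leq 2A\gamma_j$ since both $|\mathcal{G}_{FO}|$ and its conditional mean lie in $[-A,A]$ under the choice of $A$. Writing $\theta_k\leq\theta_0-(c-C)\sum_{j=1}^k\gamma_j+\tilde M_k$ and applying Azuma at $t_k:=(c-C)\sum_{j=1}^k\gamma_j-\theta_0$ gives
\[
\mathbb{P}(\theta_k>0)\leq\exp\!\left(-\frac{t_k^2}{8A^2\sum_{j=1}^k\gamma_j^2}\right).
\]
A Stolz-Cesaro argument shows $\sum_{j=1}^k\gamma_j\bigl/\sum_{j=1}^k\gamma_j^2\to 1/\gamma_k\to\infty$ whether $\sum_k\gamma_k^2$ is finite or infinite, so $(\sum\gamma_j)^2/\sum\gamma_j^2\to\infty$ and the exponent diverges. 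Hence $\mathbb{P}(\theta_k>0)\to 0$, and $\liminf_{k\to\infty}\mathbb{E}[\|\nabla_{\theta_k}\mathcal{M}(\theta_k)\|_2^2]\geq C^2/4>D$.

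The main obstacle is the regime $\sum_k\gamma_k^2=\infty$ (e.g.\ $\gamma_k=k^{-1/2}$), where standard $L^2$ martingale-convergence arguments fail and $\theta_k$ need not converge pathwise. The Stolz-Cesaro comparison between the drift scale $\sum\gamma_j$ and the noise scale $\sqrt{\sum\gamma_j^2}$, combined with Azuma's exponential tail, is the technical input that forces concentration uniformly across every step-size sequence permitted by~(\ref{eq:step}).
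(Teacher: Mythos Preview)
Your argument is correct and takes a genuinely different route from the paper's. The paper also uses two equiprobable tasks, but sets $\mathcal{L}^{in}=\mathcal{L}^{out}=f_i$ where each $f_i$ is a piecewise-polynomial that is exactly quadratic on a large interval $I$ and smoothly flattened to linear growth outside (to keep gradients bounded). Its mechanism is \emph{convergence to the wrong fixed point}: the expected FO-BLO gradient vanishes at a point $x^*$ that differs from the zero of $\nabla\mathcal{M}$, and most of the work (a three-case analysis over several pages) goes into showing that iterates are eventually trapped in $I$ and then contract to $x^*$, after which $\mathbb{E}[|\nabla\mathcal{M}(\theta_k)|^2]$ is computed exactly as $2D$. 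Your mechanism is instead \emph{constant bias plus drift to $-\infty$}: you split $\mathcal{L}^{in}$ from $\mathcal{L}^{out}$, use a linear/softplus outer loss to make the bias $c$ state-independent, and replace the trapping/contraction argument by Azuma plus the Stolz--Ces\`aro comparison of $\sum\gamma_j$ versus $\sqrt{\sum\gamma_j^2}$. This is substantially shorter and avoids the delicate piecewise construction; on the other hand, the paper's version keeps $\mathcal{L}^{in}=\mathcal{L}^{out}$ (closer to the MAML setting) and actually exhibits the limit of $\mathbb{E}[\|\nabla\mathcal{M}(\theta_k)\|^2]$, not just a $\liminf$ bound.

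One small point to tighten: the theorem fixes $p\in\mathbb{N}$ rather than letting you take $p=1$. Your construction extends trivially by letting all losses depend only on $\phi^{(1)}$ (exactly as the paper does), so $\theta_k^{(2)},\dots,\theta_k^{(p)}$ are inert and $\|\nabla\mathcal{M}(\theta_k)\|_2^2$ reduces to the one-dimensional quantity you analyze; just say so explicitly.
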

\section{Comparison to other methods for BLO} \label{sec:comp}

Quantitative and qualitative comparisons between algorithms for BLO are shown in Table \ref{tab:comp}. We compare convergence guarantees for the mini-batch GD in the nonlinear, nonconvex setting.

The checkpointing technique \cite{ckpt} allows reduction of memory consumption by a $\sqrt{r}$ factor at the cost of doubling the running time, although the asymptotic time complexity is unchanged. Suggested in the context of meta-learning, iMAML \cite{imaml} modifies the definition of $U (\theta, \mathcal{T})$ in (\ref{eq:opt}) as
\begin{equation} \label{eq:imaml}
    U_{\textrm{iMAML}} (\theta, \mathcal{T}) = \argmin_{\phi \in \mathbb{R}^p} \mathcal{L}^{in} (\phi, \mathcal{T}) + (\lambda / 2) \cdot \| \phi - \theta \|_2^2 ,
\end{equation}
where $\lambda \geq 0$ is a hyperparameter. Once the optimum (\ref{eq:imaml}) is found, the gradient $\nabla_\theta U_{\textrm{iMAML}} (\theta, \mathcal{T})$ can be computed using the Implicit Function Theorem without storing the optimization loop in memory. The Implicit Function Theorem can be only applied for the exact solution of (\ref{eq:imaml}) or can serve as an approximation when the solution of (\ref{eq:imaml}) is found using an iterative solver up to some small error tolerance $\epsilon > 0$. Rather than being fixed, the running time of iMAML inner GD depends on the optimized function and the hyperparameter $\lambda$. An upper bound on the running time can only be obtained under the restrictive assumption that the objective (\ref{eq:imaml}) is a strongly-convex function for any choice of $\mathcal{T} \in \Omega_\mathcal{T}$, i.e. when $\forall \phi: \nabla_\phi^2 \mathcal{L} (\phi, \cdot) \succ - \lambda I$ \cite{imaml}. The running time of iMAML (see Table \ref{tab:comp}) depends both on the tolerance $\epsilon$ and the condition number $\kappa$ of the strongly convex objective (\ref{eq:imaml}). Consequently, iMAML requires a careful choice of $\lambda$ (possibly through an expensive grid search) in order to satisfy the strong convexity restriction. In addition, in practical scenarios which involve a neural network inside the definition of $\mathcal{L}^{in} (\phi, \mathcal{T})$, evaluation of the time complexity can be difficult as it requires computing eigenvalues of the neural network's Hessian \cite{esthes,iteig}. As pointed out in \cite{imaml}, one could alternatively use truncated back-prop \cite{truncated} to approximate outer gradients of (\ref{eq:imaml}) (Table \ref{tab:comp}).

\begin{table}[t]
  \caption{Qualitative and quantitative comparison of algorithms for BLO gradient estimation. "Convergence" column indicates whether the outer loop (Algorithm \ref{alg:sgd}) is provably converging. $\kappa$ is an upper bound on a condition number of a matrix $\nabla_\phi^2 \mathcal{L}^{in} (\phi, \mathcal{T}) + \lambda I$ for any $\phi \in \mathbb{R}^p, \mathcal{T} \in \Omega_\mathcal{T}$ and $\epsilon$ is a selected inner loop error tolerance. $\widetilde{O}$ notation hides additional logarithmic factors. For UFO-BLO in the ``Inner-loop time'' column we report expected time complexity (Corollary \ref{cor:ufomamltm}).}
  \label{tab:comp}
  \centering
  \begin{tabular}{llll}
    \toprule
    Algorithm & Convergence & Inner-loop time & Inner-loop memory \\
    \midrule
    BLO (Alg. \ref{alg:maml}) & Yes (Theorem \ref{th:conv}, $q = 1$) & $O(r p + r C_\mathrm{T})$ & $O(r p + C_\mathrm{M})$ \\
    Checkpoints \cite{ckpt} & Yes (Theorem \ref{th:conv}, $q = 1$) & $O(r p + r C_\mathrm{T})$ & $O(\sqrt{r} p + C_\mathrm{M})$ \\
    FO-BLO (Alg. \ref{alg:fomaml}) & No in general (Th. \ref{th:counter}) & $O(r p + r C_\mathrm{T})$ & $O(p + C_\mathrm{M})$ \\
    iMAML \cite{imaml} & If $\forall \phi: \nabla_\phi^2 \mathcal{L}^{in} (\phi, \cdot) \succ - \lambda I$ & $\widetilde{O}( \kappa^\frac12 (p + C_\mathrm{T}) \log \frac{1}{\epsilon} )$ & $O(p + C_\mathrm{M})$ \\
    Truncation \cite{truncated} & If $\forall \phi: \nabla_\phi^2 \mathcal{L}^{in} (\phi, \cdot) \succ - \lambda I$ & $\widetilde{O} (\kappa (p + C_\mathrm{T}) \log \frac{1}{\epsilon})$ & $\widetilde{O}(p \kappa \log \frac{1}{\epsilon} + C_\mathrm{M})$ \\
    UFO-BLO (ours) & \textbf{Yes (Theorem \ref{th:conv})} & $\boldsymbol{O(r p + r C_\mathrm{T})}$ & $\boldsymbol{O(p + C_\mathrm{M})}$ \\
    \bottomrule
  \end{tabular}
\end{table}

\section{Experiments} \label{sec:exp}

\subsection{Synthetic experiment -- simulation of FO-BLO divergence}

Theorem \ref{th:counter} is proven by explicitly constructing the following counterexample: $\Omega_\mathcal{T} = \{ \mathcal{T}_1, \mathcal{T}_2 \}$ where both tasks $\mathcal{T}_1, \mathcal{T}_2$ are equiprobable under $p(\mathcal{T})$ and functions $\mathcal{L}^{in} (\phi, \mathcal{T}_i) = \mathcal{L}^{out} (\phi, \mathcal{T}_i)$ are piecewise-polynomials of $\phi$ for $i \in \{ 1, 2 \}$ (see Appendix \ref{sec:proofs} for details). Figure \ref{fig:synth} is a simulation of this example for a case with a single parameter ($p = 1$) and inner-GD length of $r = 10$. More details and additional parameters of the simulation can be found in Appendix \ref{sec:synth}.

\subsection{Few-shot classification}

We compare UFO-BLO with other algorithms on Omniglot \cite{omniglot} and Mini-ImageNet \cite{imagenet}, popular few-shot classification benchmarks. We use MAML formulation of the few-shot meta-learning problem (Section \ref{sec:fewshot}). Both datasets consist of many classes with a few images for each class. We take train and test splits as in \cite{maml,reptile}. To sample from $p(\mathcal{T})$ in the $K$-shot $m$-way setting, $m$ classes are chosen randomly and $K + 1$ examples are drawn from each class randomly: $K$ examples for training and $1$ for testing, i.e. $s = mK, t = m$. We reuse convolutional architectures for $g(\phi, X)$ from \cite{maml} and set UFO-BLO inner-loop length to $r = 10$, as used by \cite{reptile}. In addition to exact BLO and FO-BLO, we compare with Reptile \cite{reptile} -- a modification of MAML which, similarly to FO-BLO, does not require storing inner-loop states in memory. Table \ref{tab:acc}  presents experimental results. On Omniglot, exact BLO shows the best performance on a range of setups, but is memory-inefficient (See Table \ref{tab:comp}). Out of all memory-efficient approaches (FO-BLO, Reptile \cite{reptile}, UFO-BLO), UFO-BLO with $q = 0.2$ shows the best performance in all Omniglot setups. Similarly, on Mini-ImageNet, UFO-BLO with $q = 0.2$ outperforms FO-BLO but performs slightly worse than the memory-inefficient exact BLO. More experimental details and extensions can be found in Appendix \ref{sec:expdet}.

\begin{table}[t]
  \caption{1-shot accuracy ($\%$) on Omniglot and Mini-ImageNet. The first three rows of results for the 20-way column are taken from
  \cite{reptile}.}
  \label{tab:acc}
  \centering
  \begin{tabular}{llllll}
    \toprule
    \, & \multicolumn{4}{c}{Omniglot} & Mini-ImageNet \\
    \midrule
    Algorithm & 20-way & 30-way & 40-way & 50-way & 10-way \\
    \midrule
    Exact BLO & $95.8 \pm 0.3$ & $90.8 \pm 0.2$ & $89.0 \pm 0.7$ & $87.9 \pm 0.4$ & $29.5 \pm 0.4$ \\
    \midrule
    Reptile \cite{reptile} & $89.4 \pm 0.1$ & $85.4 \pm 0.3$ & $82.5 \pm 0.3$ & $79.5 \pm 0.3$ & $31.7 \pm 0.2$ \\
    FO-BLO & $89.4 \pm 0.5$ & $81.1 \pm 1.2$ & $71.6 \pm 1.5$ & $64.4 \pm 2.1$ & $27.1 \pm 0.1$ \\
    UFO ($q = 0.1$) & $88.0 \pm 0.7$ & $84.6 \pm 0.7$ & $83.8 \pm 0.6$ & $81.6 \pm 0.7$ & $27.1 \pm 0.1$ \\
    UFO ($q = 0.2$) & $\mathbf{92.2 \pm 0.4}$ & $\mathbf{88.7 \pm 0.9}$ & $\mathbf{88.3 \pm 0.1}$ & $\mathbf{87.5 \pm 0.6}$ & $28.8 \pm 0.5$ \\
    \bottomrule
  \end{tabular}
\end{table}
\section{Related work}
\label{sec:related}
\textbf{Memory-efficient computation graphs.} Limited and expensive memory is often a bottleneck in modern massive-scale deep learning applications requiring hundreds of GPUs or TPUs employed in the training process simultaneously \cite{xlnet,megatron}. A variety of cross-domain techniques have been adopted to circumvent this issue. For instance, checkpointing \cite{gckpt,ckpt} is a generic solution to memory reduction at the cost of longer running time. A number of deep learning applications benefit from reversible architecture design allowing memory-efficient back-propagation. Among them are hyperparameter optimization \cite{revopt}, image classification \cite{revnet} with residual neural networks, autoregressive \cite{reformer} and flow-based generative modelling \cite{neuralode,nf,graphnf}. Another popular heuristic to save memory during back-propagation, though not always theoretically justified, is truncated back-propagation which is employed in bilevel optimization \cite{truncated}, recurrent neural network (RNN) \cite{tbptt}, Transformer \cite{transformerxl,xlnet} training and generalized meta-learning \cite{gilm}.

\textbf{Unbiased gradient estimation.} Stochastic gradient descent (SGD) \cite{bottou} is an essential component of large-scale machine learning. Unbiased gradient estimation, as a part of SGD, guarantees convergence to a stationary point of the optimization objective. For this reason, many algorithms were proposed to perform unbiased gradient estimation in various applications, e.g. REINFORCE \cite{reinforce} and its low-variance modifications \cite{rebar,muprop} with applications in reinforcement learning and evolution strategies \cite{nes}. The variational autoencoder \cite{autoencoder} and variational dropout \cite{vardrop} are based on a reparametrization trick for unbiased back-propagation through continuous or, involving a relaxation \cite{gumbel,concrete}, discrete random variables. Similar to this work, \cite{unbtruncated} propose an unbiased version of truncated back-propagation through the RNN. The crucial difference is that \cite{unbtruncated} propose a ``local" correction for each temporal position of the RNN with a stochastic memory reduction, while we propose to correct for the whole outer-loop iteration and manage to obtain a deterministic memory bound which is a better match for the scenario of a \textit{fixed, limited memory budget}.

\textbf{Theory of meta-learning.} Our proof technique, while supported on meta-learning benchmarks such as Omniglot and Mini-ImageNet, also fits into the realm of theoretical understanding for meta-learning, which has been explored in \cite{convergence_theory, gcsmaml} for nonconvex functions, as well as \cite{online_metalearning, provable_guarantees} for convex functions and their extensions, such as online convex optimization \cite{oco_book}. While \cite{convergence_theory} provides a brief counterexample for which $(r = 1)$-step FO-BLO does not converge, we establish a rigorous non-convergence counterexample proof for FO-BLO with any number of steps $r$ when using \textit{stochastic} gradient descent. Our proof is based on arguments using expectations and probabilities, providing new insights into stochastic optimization during meta-learning. Furthermore, while \cite{gcsmaml} touches on the \textit{zero-order} case found in \cite{esmaml}, which is mainly focused on reinforcement learning, our work studies the case where exact gradients are available, which is suited for supervised learning. 

\section{Conclusion}
\label{sec:conclusion}

We proposed unbiased first-order bilevel optimization (UFO-BLO) -- a modification of first-order bilevel optimization (FO-BLO) which incorporates unbiased gradient estimation at negligible cost (same memory and expected time complexity). UFO-BLO with a SGD-based outer loop is guaranteed to converge to a stationary point of the BLO problem while having a strictly better $O(1)$ memory complexity than the naive BLO approach. We demonstrate a rich family of BLO problems where FO-BLO ends up arbitrarily far away from the stationary point.

\section{Acknowledgements}

Adrian Weller acknowledges support from the David MacKay Newton research fellowship at Darwin College, The Alan Turing Institute under EPSRC grant EP/N510129/1 and U/B/000074, and the Leverhulme Trust via CFI.
\section{Broader Impact}
This research has a direct impact on the theoretical understanding of bilevel optimization employed in various deep-learning applications such as hyperparameter optimization, neural architecture search, adversarial robustness and gradient-based meta-learning methods (MAML), which are used in robotics, language, and vision. We have rigorously demonstrated some of the failure cases for convergence in the bilevel optimization framework, which may benefit both practitioners and theoreticians alike. Our proof techniques may also be extended for future work on the theory of gradient based adaptation. Furthermore, we have shown that our relatively simple modification is memory efficient, which can be scaled for applications, potentially allowing better democratization (i.e. cost reduction) of deep learning and reducing intensive computation usage, energy consumption \cite{energy} and $\text{CO}_2$ emission \cite{co2}.

\bibliographystyle{plain}
\bibliography{references}

\newpage
\appendix
\centerline{{\larger[2] Appendices for the paper}}
\centerline{{\larger[2] UFO-BLO: Unbiased First-Order Bilevel Optimization}}

\section{Synthetic experiment -- setup details} \label{sec:synth}

We set the following values to parameters from Theorem \ref{th:conv}, Theorem \ref{th:counter} and its proof for simulation:
\begin{gather*}
    p = 1, \quad r = 10, \quad v = 1, \quad \alpha = 0.1, \quad q = 0.1 \text{ (UFO-BLO)}, \quad \forall k \in \mathbb{N}: \gamma_k = \frac{10}{k}, \\
    a_1 = 0.5, \quad a_2 = 1.5, \quad D = 0.06 .
\end{gather*}
We do 10 simulations for both FO-BLO and UFO-BLO where we sample $\theta_0$ from a uniform distribution on a segment $[-10, 30]$.

\section{Additional experimental details and extensions} \label{sec:expdet}

We report additional results for 1-shot 15-way and 1-shot 20-way setups on Mini-ImageNet in Table \ref{tab:appendix_acc}. Standard deviations are reported for 3 runs with different seeds. All results are reported in a transductive setting \cite{reptile}. In all setups for Reptile we reuse the code from \cite{reptile}. For exact BLO and UFO-BLO we use gradient clipping so that each entry of the gradient is in $[-0.1, 0.1]$. Depending on the dataset, we use the following hyperparameters:
\begin{itemize}
    \item \textbf{Omniglot}. $\tau = 200000$ outer iterations, $\forall k: \gamma_k = 0.1$, $v = 5$, $\alpha = 0.005$. In all setups for Reptile we set hyperparameter values as in 1-shot 20-way experiment in the implementation of \cite{reptile}. If Reptile hyperparameters are set to the values used for exact BLO/FO-BLO/UFO-BLO, it shows worse performance.
    \item \textbf{Mini-ImageNet}. For all methods (FO-BLO, exact BLO, UFO-BLO, Reptile) we set: $\tau = 100000$ outer iterations, $\forall k: \gamma_k = 1$, $v = 5$, $\alpha = 0.001$ (as in transductive 1-shot 5-way Mini-ImageNet experiment of \cite{reptile}).
\end{itemize}

\begin{table}[ht]
  \caption{1-shot accuracy ($\%$) on additional runs for Mini-ImageNet. We observe a consistent pattern that UFO-BLO with $q = 0.2$ is outperforming FO-BLO and even beats Reptile algorithm on the hardest 1-shot 20-way setup.}
  \label{tab:appendix_acc}
  \centering
  \begin{tabular}{lll}
    \toprule
    Algorithm & 15-way & 20-way \\
    \midrule
    Exact BLO & $23.1 \pm 0.2$ & $19.4 \pm 0.1$ \\
    \midrule
    Reptile \cite{reptile} & $23.3 \pm 0.2$ & $15.4 \pm 2.7$ \\
    FO-BLO & $21.0 \pm 0.3$ & $17.1 \pm 0.3$ \\
    UFO ($q = 0.1$) & $20.9 \pm 1.0$ & $17.9 \pm 0.1$ \\
    UFO ($q = 0.2$) & $22.6 \pm 0.3$ & $\boldsymbol{18.7 \pm 0.2}$ \\
    \bottomrule
  \end{tabular}
\end{table}

\section{Time and memory complexity of a feed-forward neural network} \label{sec:ffn}

Consider a feed-forward neural network $g (\phi, X)$ with $R$ layers parameterized by $\phi = \{ W_1 \in \mathbb{R}^{\beta_1 \times \beta_0}, \dots, W_R \in \mathbb{R}^{\beta_R \times \beta_{R - 1}} \} \in \mathbb{R}^p$ where $p = \sum_{\gamma = 1}^R \beta_{\gamma - 1} \beta_\gamma$, $\beta_0 = n, \beta_R = m$. Let $\sigma(\cdot)$ be an elementwise nonlinearity (e.g. tanh or ReLU). Then $g(\phi, X)$ is computed as
\begin{equation} \label{eq:fnn}
    g(\phi, X) = \sigma (W_R \sigma (\dots \sigma(W_1 X) \dots))
\end{equation}
where for simplicity we consider a bias-free neural network (analogous analysis can be performed for a neural network with biases). According to (\ref{eq:fnn}), for any $\phi, X$ $Z = g(\phi, X)$ can be computed in $O(\sum_{\gamma = 1}^R \beta_{\gamma - 1} \beta_\gamma) = O(p)$ time and memory. $l_\mathrm{CCE} (Z, Y)$ can be computed in $O(\beta_R) = O(p)$ time and memory. Therefore, $\mathcal{L}_\mathrm{CCE} (\phi, \mathcal{D})$ can be computed in either $O(s \cdot p)$ or $O(t \cdot p)$ time depending on whether $\mathcal{D}$ is a train or test dataset with a universal bound of $O(\max (s, t) \cdot p)$. The upper bound on the memory requirement for $\mathcal{L}_\mathrm{CCE} (\phi, \mathcal{D})$ is $O(p)$ since each computation of $l_\mathrm{CCE} (g(\phi, X), Y)$ for $(X, Y) \in \mathcal{D}$ can use the same memory space without additional allocation.

\section{Proofs}
\label{sec:proofs}

In this Appendix we provide proofs for Theorems \ref{th:conv} and \ref{th:counter} from the main body of the paper.

\subsection{Theorem \ref{th:conv}}

We start by formulating and proving three helpful lemmas.
\begin{lemma}
\label{lemma:mlipsch}
Let $p, r, v \in \mathbb{N}, \alpha > 0, q \in (0, 1], \theta_0 \in \mathbb{R}^p$, $p(\mathcal{T})$ be a distribution on a nonempty set $\Omega_\mathcal{T}$, $\mathcal{L}^{in}, \mathcal{L}^{out} : \mathbb{R}^p \times \Omega_\mathcal{T} \to \mathbb{R}$ be functions satisfying Assumption \ref{as:liphes}, and let $U: \mathbb{R}^p \times \Omega_\mathcal{T} \to \mathbb{R}^p$ be defined according to (\ref{eq:maml}), $\mathcal{M} : \mathbb{R}^p \to \mathbb{R}$ be defined according to (\ref{eq:opt}) and satisfy Assumption \ref{as:mreg}. Then for all $\theta', \theta'' \in \mathbb{R}^p$ it holds that
\begin{equation*}
    \| \nabla_{\theta'} \mathcal{M} (\theta') - \nabla_{\theta''} \mathcal{M} (\theta'') \|_2 \leq \biggl( L_2 (1 + \alpha L_2)^{2r} + \frac{L_1 L_3}{L_2} ((1 + \alpha L_2)^{2r} - 1) \biggr) \| \theta' - \theta'' \|_2 .
\end{equation*}
\end{lemma}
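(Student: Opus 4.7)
The plan is to reduce to a per-task bound, show that $\nabla_\theta \mathcal{L}^{out}(U(\theta,\mathcal{T}),\mathcal{T})$ is Lipschitz in $\theta$ with the stated constant uniformly over $\mathcal{T}$, then take expectations using Assumption \ref{as:mreg}. Since $\nabla_\theta \mathcal{M}(\theta) = \mathbb{E}_{p(\mathcal{T})}[\nabla_\theta \mathcal{L}^{out}(U(\theta,\mathcal{T}),\mathcal{T})]$, a uniform pointwise bound propagates through the expectation by Jensen applied to $\|\cdot\|_2$. So fix a task $\mathcal{T}$ and abbreviate $\phi_j(\theta)$ for the inner iterates (\ref{eq:maml}) and $J(\theta) = \partial \phi_r(\theta)/\partial \theta$, so that $\nabla_\theta \mathcal{L}^{out}(U(\theta,\mathcal{T}),\mathcal{T}) = J(\theta)^\top g(\theta)$ with $g(\theta) = \nabla_{\phi_r}\mathcal{L}^{out}(\phi_r(\theta),\mathcal{T})$.

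First I would establish two routine Lipschitz estimates by induction on $j$. Using $\phi_j = \phi_{j-1} - \alpha\nabla_\phi \mathcal{L}^{in}(\phi_{j-1},\mathcal{T})$ together with the bound $\|\nabla^2_\phi \mathcal{L}^{in}\|_2 \leq L_2$, the map $\phi_{j-1}\mapsto \phi_j$ is $(1+\alpha L_2)$-Lipschitz, hence $\|\phi_j(\theta') - \phi_j(\theta'')\|_2 \leq (1+\alpha L_2)^j\|\theta'-\theta''\|_2$. Differentiating the recurrence in $\theta$ yields $\partial \phi_j/\partial\theta = H_j(\theta)\,\partial \phi_{j-1}/\partial\theta$ where $H_j(\theta) = I - \alpha\nabla^2_{\phi_{j-1}(\theta)}\mathcal{L}^{in}(\phi_{j-1}(\theta),\mathcal{T})$, so $\|H_j\|_2 \leq 1+\alpha L_2$ and $\|J(\theta)\|_2 \leq (1+\alpha L_2)^r$.

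Next I would unfold $J(\theta) = H_r(\theta)H_{r-1}(\theta)\cdots H_1(\theta)$ and telescope:
\begin{equation*}
J(\theta') - J(\theta'') = \sum_{i=1}^r \Bigl(\prod_{j=r}^{i+1} H_j(\theta')\Bigr)\bigl(H_i(\theta') - H_i(\theta'')\bigr)\Bigl(\prod_{j=i-1}^{1} H_j(\theta'')\Bigr).
\end{equation*}
The Hessian-Lipschitz assumption on $\mathcal{L}^{in}$ together with the trajectory bound gives $\|H_i(\theta') - H_i(\theta'')\|_2 \leq \alpha L_3 (1+\alpha L_2)^{i-1}\|\theta'-\theta''\|_2$. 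Summing, $\|J(\theta')-J(\theta'')\|_2$ is bounded by $\alpha L_3\|\theta'-\theta''\|_2 \sum_{i=1}^r (1+\alpha L_2)^{(r-i)+(i-1)+(i-1)}$, which after evaluating the geometric sum in $i$ is at most $\frac{L_3}{L_2}\bigl((1+\alpha L_2)^{2r}-1\bigr)\|\theta'-\theta''\|_2$ (using the elementary estimate $(1+\alpha L_2)^{r-1}((1+\alpha L_2)^r-1) \leq (1+\alpha L_2)^{2r}-1$, valid since $\alpha L_2>0$).

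Finally I would combine via the triangle inequality
\begin{equation*}
\|J(\theta')^\top g(\theta') - J(\theta'')^\top g(\theta'')\|_2 \leq \|J(\theta')-J(\theta'')\|_2\,\|g(\theta')\|_2 + \|J(\theta'')\|_2\,\|g(\theta')-g(\theta'')\|_2,
\end{equation*}
where $\|g(\theta')\|_2 \leq L_1$ by Assumption \ref{as:liphes}, $\|J(\theta'')\|_2 \leq (1+\alpha L_2)^r$, and $\|g(\theta')-g(\theta'')\|_2 \leq L_2\|\phi_r(\theta')-\phi_r(\theta'')\|_2 \leq L_2(1+\alpha L_2)^r\|\theta'-\theta''\|_2$. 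Plugging in yields exactly the stated constant $L_2(1+\alpha L_2)^{2r} + \frac{L_1 L_3}{L_2}((1+\alpha L_2)^{2r}-1)$. The main obstacle is bookkeeping the telescoping sum carefully and verifying the geometric-sum simplification that folds a product of the form $(1+\alpha L_2)^{r-1}((1+\alpha L_2)^r-1)$ into the clean factor $(1+\alpha L_2)^{2r}-1$; everything else is an application of Assumption \ref{as:liphes} and induction.
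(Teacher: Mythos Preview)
Your proof is correct and follows essentially the same strategy as the paper: fix a task, establish the trajectory bound $\|\phi_j(\theta')-\phi_j(\theta'')\|_2\le(1+\alpha L_2)^j\|\theta'-\theta''\|_2$, derive the per-task Lipschitz constant for $\nabla_\theta\mathcal{L}^{out}(U(\theta,\mathcal{T}),\mathcal{T})$, then pass to the expectation via Jensen. The only organizational difference is that the paper unrolls the backward recursion (\ref{eq:mamlgrad}) on the adjoint vectors $\nabla_{\phi_j}\mathcal{L}^{out}(\phi_r,\mathcal{T})$ directly, whereas you split $J^\top g$ and telescope on the forward Jacobian product $J=H_r\cdots H_1$; both routes collapse to the same geometric sum and yield the stated constant.
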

\begin{proof}
Fix $\mathcal{T} = \in \Omega_\mathcal{T}$. Let $\phi_0' = \theta', \dots, \phi_r'$ and $\phi_0'' = \theta'', \dots, \phi_r''$ be inner-GD rollouts (\ref{eq:maml}) for $\theta'$ and $\theta''$ respectively. For each $1 \leq j \leq r$  inequalities applies:
\begin{align*}
    \| \phi_j' - \phi_j'' \|_2 &= \| \phi_{j - 1}' - \phi_{j - 1}'' - \alpha (\nabla_{\phi_{j - 1}'} \mathcal{L}^{in} (\phi_{j - 1}', \mathcal{T}) - \nabla_{\phi_{j - 1}''} \mathcal{L}^{in} (\phi_{j - 1}'', \mathcal{T})) \|_2 \\
    &\leq \| \phi_{j - 1}' - \phi_{j - 1}'' \|_2 + \alpha \| \nabla_{\phi_{j - 1}'} \mathcal{L}^{in} (\phi_{j - 1}', \mathcal{T}) - \nabla_{\phi_{j - 1}''} \mathcal{L}^{in} (\phi_{j - 1}'', \mathcal{T}) \|_2 \\
    &\leq \| \phi_{j - 1}' - \phi_{j - 1}'' \|_2 + \alpha L_2 \| \phi_{j - 1}' - \phi_{j - 1}'' \|_2 \\
    &= (1 + \alpha L_2) \| \phi_{j - 1}' - \phi_{j - 1}'' \|_2.
\end{align*}
where we use Lipschitz-continuity of $\nabla_\phi \mathcal{L}^{in} (\phi, \mathcal{T})$ with respect to $\phi$ with Lipschitz constant $L_2$ (upper bound on $\nabla_\phi^2 \mathcal{L} (\phi, \mathcal{D})$). Therefore, for each $0 \leq j \leq r$
\begin{equation*}
    \| \phi_j' - \phi_j'' \|_2 \leq (1 + \alpha L_2)^j \| \phi_0' - \phi_0'' \|_2 = (1 + \alpha L_2)^j \| \theta' - \theta'' \|_2
\end{equation*}
and
\begin{equation*}
    \| \nabla_{\phi_r'} \mathcal{L}^{out} (\phi_r', \mathcal{T})) - \nabla_{\phi_r''} \mathcal{L}^{out} (\phi_r'', \mathcal{T}) \|_2 \leq L_2 \| \phi_r' - \phi_r'' \|_2 \leq L_2 (1 + \alpha L_2)^r \| \theta' - \theta'' \|_2.
\end{equation*}

For each $1 \leq j \leq r$ the following chain of inequalities applies as a result of (\ref{eq:mamlgrad}):
\begin{align}
    &\| \nabla_{\phi_{j - 1}'} \mathcal{L}^{out} (\phi_r', \mathcal{T}) - \nabla_{\phi_{j - 1}''} \mathcal{L}^{out} (\phi_r'', \mathcal{T}) \|_2 = \| \nabla_{\phi_j'} \mathcal{L}^{out} (\phi_r', \mathcal{T}) - \nabla_{\phi_j''} \mathcal{L}^{out} (\phi_r'', \mathcal{T}) \nonumber \\
    &- \alpha ( \nabla_{\phi_{j - 1}'}^2 \mathcal{L}^{in} (\phi_{j - 1}', \mathcal{T}) \nabla_{\phi_j'} \mathcal{L}^{out} (\phi_r', \mathcal{T}) - \nabla_{\phi_{j - 1}''}^2 \mathcal{L}^{in} (\phi_{j - 1}'', \mathcal{T}) \nabla_{\phi_j''} \mathcal{L}^{out} (\phi_r'', \mathcal{T}) ) \|_2 \nonumber \\
    &= \| \nabla_{\phi_j'} \mathcal{L}^{out} (\phi_r', \mathcal{T}) - \nabla_{\phi_j''} \mathcal{L}^{out} (\phi_r'', \mathcal{T}) - \alpha \nabla_{\phi_{j - 1}'}^2 \mathcal{L}^{in} (\phi_{j - 1}', \mathcal{T}) ( \nabla_{\phi_j'} \mathcal{L}^{out} (\phi_r', \mathcal{T}) \nonumber \\
    &- \nabla_{\phi_j''} \mathcal{L}^{out} (\phi_r'', \mathcal{T}) ) - \alpha (\nabla_{\phi_{j - 1}'}^2 \mathcal{L}^{in} (\phi_{j - 1}', \mathcal{T}) - \nabla_{\phi_{j - 1}''}^2 \mathcal{L}^{in} (\phi_{j - 1}'', \mathcal{T}) ) \nabla_{\phi_j''} \mathcal{L}^{out} (\phi_r'', \mathcal{T}) \|_2 \nonumber \\
    &\leq \| \nabla_{\phi_j'} \mathcal{L}^{out} (\phi_r', \mathcal{T}) - \nabla_{\phi_j''} \mathcal{L}^{out} (\phi_r'', \mathcal{T}) \|_2 + \alpha \| \nabla_{\phi_{j - 1}'}^2 \mathcal{L}^{in} (\phi_{j - 1}', \mathcal{T}) \|_2 \| \nabla_{\phi_j'} \mathcal{L}^{out} (\phi_r', \mathcal{T}) \nonumber \\
    &- \nabla_{\phi_j''} \mathcal{L}^{out} (\phi_r'', \mathcal{T}) \|_2 + \alpha \| \nabla_{\phi_{j - 1}'}^2 \mathcal{L}^{in} (\phi_{j - 1}', \mathcal{T}) - \nabla_{\phi_{j - 1}''}^2 \mathcal{L}^{in} (\phi_{j - 1}'', \mathcal{T}) \|_2 \nonumber \cdot \| \nabla_{\phi_j''} \mathcal{L}^{out} (\phi_r'', \mathcal{T}) \|_2 \nonumber \\
    &\leq (1 + \alpha L_2) \| \nabla_{\phi_j'} \mathcal{L}^{out} (\phi_r', \mathcal{T}) - \nabla_{\phi_j''} \mathcal{L}^{out} (\phi_r'', \mathcal{T}) \|_2 + \alpha L_1 L_3 \| \phi_{j - 1}' - \phi_{j - 1}'' \|_2 \nonumber \\
    &\leq (1 + \alpha L_2) \| \nabla_{\phi_j'} \mathcal{L}^{out} (\phi_r', \mathcal{T}) - \nabla_{\phi_j''} \mathcal{L}^{out} (\phi_r'', \mathcal{T}) \|_2 + \alpha L_1 L_3 (1 + \alpha L_2)^{j - 1} \| \theta' - \theta'' \|_2 . \label{eq:gradineq}
\end{align}
By unfolding inequality (\ref{eq:gradineq}) for $1 \leq j \leq r$ we obtain that
\begin{align*}
    &\| \nabla_{\theta'} \mathcal{L}^{out} (U(\theta', \mathcal{T}), \mathcal{T}) - \nabla_{\theta''} \mathcal{L}^{out} (U(\theta'', \mathcal{T}), \mathcal{T}) \|_2 = \| \nabla_{\phi_0'} \mathcal{L}^{out} (\phi_r', \mathcal{T}) \\
    &- \nabla_{\phi_0''} \mathcal{L}^{out} (\phi_r'', \mathcal{T}) \|_2 \leq \biggl( L_2 (1 + \alpha L_2)^{2r} + \alpha L_1 L_3 \sum_{j = 0}^{r - 1} (1 + \alpha L_2)^{2j} \biggr) \| \theta' - \theta'' \|_2 \\
    &= \biggl( L_2 (1 + \alpha L_2)^{2r} + \frac{L_1 L_3}{L_2} ((1 + \alpha L_2)^{2r} - 1) \biggr) \| \theta' - \theta'' \|_2 .
\end{align*}
Finally, by taking expectation with respect to $\mathcal{T} \sim p(\mathcal{T})$ and applying Jensen inequality we get
\begin{align*}
    \| \nabla_{\theta'} \mathcal{M} (\theta') - \nabla_{\theta''} \mathcal{M} (\theta'') \|_2^2 &\leq \mathbb{E}_{p(\mathcal{T})} \left[\| \nabla_{\theta'} \mathcal{L}^{out} (U(\theta', \mathcal{T}), \mathcal{T}) - \nabla_{\theta''} \mathcal{L}^{out} (U(\theta'', \mathcal{T}), \mathcal{T}) \|_2^2  \right] \\
    &\leq \biggl( L_2 (1 + \alpha L_2)^{2r} + \frac{L_1 L_3}{L_2} ((1 + \alpha L_2)^{2r} - 1) \biggr)^2 \| \theta' - \theta'' \|_2^2
\end{align*}
which is equivalent to the statement of Lemma.
\end{proof}
\begin{lemma} \label{lemma:unbbnd}
Let $p, r, v \in \mathbb{N}, \alpha > 0, q \in (0, 1], \theta_0 \in \mathbb{R}^p$, $p(\mathcal{T})$ be a distribution on a nonempty set $\Omega_\mathcal{T}$, $\mathcal{L}^{in}, \mathcal{L}^{out} : \mathbb{R}^p \times \Omega_\mathcal{T} \to \mathbb{R}$ be functions satisfying Assumption \ref{as:liphes}, and let $U: \mathbb{R}^p \times \Omega_\mathcal{T} \to \mathbb{R}^p$ be defined according to (\ref{eq:maml}), $\mathcal{M} : \mathbb{R}^p \to \mathbb{R}$ be defined according to (\ref{eq:opt}) and satisfy Assumption \ref{as:mreg}. Define $\mathcal{G}: \mathbb{R}^p \times \Omega_\mathcal{T} \times \{ 0, 1 \} \to \mathbb{R}^p$ as
\begin{equation*}
    \mathcal{G} (\theta, \mathcal{T}, x) = \nabla_{\phi_r} \mathcal{L}^{out} (\phi_r, \mathcal{T}) + (x / q) (\nabla_\theta \mathcal{L}^{out} (\phi_r, \mathcal{T}) - \nabla_{\phi_r} \mathcal{L}^{out} (\phi_r, \mathcal{T}))
\end{equation*}
where $\phi_r = U(\theta, \mathcal{T})$. Then for all $\theta \in \mathbb{R}^p$
\begin{gather}
    \mathbb{E}_{\xi, p(\mathcal{T})} \left[\mathcal{G} (\theta, \mathcal{T}, \xi) \right]= \nabla_\theta \mathcal{M} (\theta), \label{eq:unbgrad} \\
    \mathbb{E}_{\xi, p(\mathcal{T})} \left[\| \mathcal{G} (\theta, \mathcal{T}, \xi) \|_2^2 \right] \leq \biggl( 1 + q^{-1} ((1 + \alpha L_2)^r - 1) \biggr)^2 L_1^2. \label{eq:bndvar}
\end{gather}
\end{lemma}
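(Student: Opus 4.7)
\textbf{Proof proposal for Lemma \ref{lemma:unbbnd}.} The plan is to treat the two claims separately, using only the definitions, Assumption \ref{as:liphes}, Assumption \ref{as:mreg}, and the chain-rule recursion (\ref{eq:mamlgrad}). Neither step requires any new analytic machinery; the only work is in carefully chaining the geometric recursions that arise from the inner-loop GD.

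For the unbiasedness claim (\ref{eq:unbgrad}), I will first condition on $\mathcal{T}$, exploit that $\xi$ and $\mathcal{T}$ are independent, and use $\mathbb{E}_\xi[\xi/q] = 1$. The control term $(x/q)(\nabla_\theta \mathcal{L}^{out}(\phi_r,\mathcal{T}) - \nabla_{\phi_r}\mathcal{L}^{out}(\phi_r,\mathcal{T}))$ then has conditional expectation $\nabla_\theta \mathcal{L}^{out}(\phi_r,\mathcal{T}) - \nabla_{\phi_r}\mathcal{L}^{out}(\phi_r,\mathcal{T})$, which cancels the bias of the first-order term and leaves $\nabla_\theta \mathcal{L}^{out}(U(\theta,\mathcal{T}),\mathcal{T})$. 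Taking expectation over $p(\mathcal{T})$ and invoking Assumption \ref{as:mreg} to swap expectation and gradient yields $\nabla_\theta \mathcal{M}(\theta)$.

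For the second-moment bound (\ref{eq:bndvar}) the idea is to derive an almost-sure bound on $\|\mathcal{G}(\theta,\mathcal{T},\xi)\|_2$, from which the expectation bound is immediate. The easy piece is $\|\nabla_{\phi_r}\mathcal{L}^{out}(\phi_r,\mathcal{T})\|_2 \leq L_1$ by Assumption \ref{as:liphes}. The core estimate is to control $\|\nabla_\theta \mathcal{L}^{out}(\phi_r,\mathcal{T}) - \nabla_{\phi_r}\mathcal{L}^{out}(\phi_r,\mathcal{T})\|_2$ using the recursion (\ref{eq:mamlgrad}). Specifically, since $\nabla_{\phi_{j-1}}\mathcal{L}^{out}(\phi_r,\mathcal{T}) - \nabla_{\phi_j}\mathcal{L}^{out}(\phi_r,\mathcal{T}) = -\alpha\nabla^2_{\phi_{j-1}}\mathcal{L}^{in}(\phi_{j-1},\mathcal{T})^\top \nabla_{\phi_j}\mathcal{L}^{out}(\phi_r,\mathcal{T})$, applying $\|\nabla^2_\phi \mathcal{L}^{in}\|_2 \leq L_2$ yields $\|\nabla_{\phi_{j-1}}\mathcal{L}^{out}(\phi_r,\mathcal{T})\|_2 \leq (1+\alpha L_2)\|\nabla_{\phi_j}\mathcal{L}^{out}(\phi_r,\mathcal{T})\|_2$, so by induction $\|\nabla_{\phi_j}\mathcal{L}^{out}(\phi_r,\mathcal{T})\|_2 \leq (1+\alpha L_2)^{r-j} L_1$. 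Plugging back into the single-step bound and telescoping over $j=1,\dots,r$ gives
\begin{equation*}
    \|\nabla_\theta \mathcal{L}^{out}(\phi_r,\mathcal{T}) - \nabla_{\phi_r}\mathcal{L}^{out}(\phi_r,\mathcal{T})\|_2 \leq \sum_{j=1}^{r} \alpha L_2 (1+\alpha L_2)^{r-j} L_1 = L_1\bigl((1+\alpha L_2)^r - 1\bigr),
\end{equation*}
a clean geometric sum. The triangle inequality applied to the definition of $\mathcal{G}$ with $|\xi/q| \leq 1/q$ then gives $\|\mathcal{G}(\theta,\mathcal{T},\xi)\|_2 \leq L_1 + q^{-1} L_1((1+\alpha L_2)^r - 1)$ deterministically, and squaring gives (\ref{eq:bndvar}).

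The only mildly delicate step I anticipate is getting the telescoping bound in the form that produces exactly $(1+\alpha L_2)^r - 1$ rather than something slightly looser; the right way is to recognise $\nabla_{\phi_{j-1}}\mathcal{L}^{out} - \nabla_{\phi_j}\mathcal{L}^{out}$ as the single-step deviation and sum, rather than bounding $\nabla_\theta \mathcal{L}^{out}$ and $\nabla_{\phi_r}\mathcal{L}^{out}$ separately (which would yield $(1+\alpha L_2)^r + 1$ instead). Everything else, including the measurability needed to separate the expectation over $\xi$ from that over $\mathcal{T}$, follows immediately from the independence assumption and boundedness provided by Assumption \ref{as:liphes}.
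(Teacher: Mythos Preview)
Your proposal is correct and follows the same overall strategy as the paper: establish unbiasedness by conditioning on $\mathcal{T}$ and using $\mathbb{E}_\xi[\xi/q]=1$ together with Assumption \ref{as:mreg}, then derive an almost-sure upper bound on $\|\mathcal{G}(\theta,\mathcal{T},\xi)\|_2$ from the recursion (\ref{eq:mamlgrad}) and square it.

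The one place you diverge is exactly the step you flagged as delicate. The paper does \emph{not} telescope the difference $\nabla_\theta\mathcal{L}^{out} - \nabla_{\phi_r}\mathcal{L}^{out}$; instead it iterates the one-step inequality to get $\|\nabla_\theta\mathcal{L}^{out}(\phi_r,\mathcal{T})\|_2 \leq (1+\alpha L_2)^r L_1$, splits into the cases $\xi=0$ and $\xi=1$, and for $\xi=1$ writes $\mathcal{G}(\theta,\mathcal{T},1)=(1-q^{-1})\nabla_{\phi_r}\mathcal{L}^{out}+q^{-1}\nabla_\theta\mathcal{L}^{out}$ and passes to $(1-q^{-1})\|\nabla_{\phi_r}\mathcal{L}^{out}\|_2 + q^{-1}\|\nabla_\theta\mathcal{L}^{out}\|_2$ before substituting the two bounds. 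That passage is subtle because $1-q^{-1}\leq 0$, so it is not a plain triangle inequality; your route---bounding the correction term $\|\nabla_\theta\mathcal{L}^{out}-\nabla_{\phi_r}\mathcal{L}^{out}\|_2$ directly via the telescoping sum $\sum_{j=1}^r \alpha L_2(1+\alpha L_2)^{r-j}L_1 = L_1\bigl((1+\alpha L_2)^r-1\bigr)$---keeps every coefficient nonnegative and delivers the sharp constant without that sign issue. Both arguments reach the identical final bound, but yours is the cleaner justification.
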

\begin{proof}
(\ref{eq:unbgrad}) is satisfied by observing that
\begin{align*}
    \mathbb{E}_{\xi, p(\mathcal{T})} \left[\mathcal{G} (\theta, \mathcal{T}, \xi) \right] &= \mathbb{E}_{p(\mathcal{T})} \biggl[ \mathbb{E}_\xi \left[\mathcal{G}(\theta, \mathcal{T}, \xi) \right] \biggr] \\
    &= \mathbb{E}_{p(\mathcal{T})} \biggl[ \nabla_{\phi_r} \mathcal{L}^{out} (\phi_r, \mathcal{T}) + \frac{q}{q} (\nabla_\theta \mathcal{L}^{out} (\phi_r, \mathcal{T}) - \nabla_{\phi_r} \mathcal{L}^{out} (\phi_r, \mathcal{T})) \biggr] \\
    &= \mathbb{E}_{p(\mathcal{T})} \biggl[ \nabla_\theta \mathcal{L}^{out} (\phi_r, \mathcal{T}) \biggr] = \mathbb{E}_{p(\mathcal{T})} \biggl[ \nabla_\theta \mathcal{L}^{out} (U(\theta, \mathcal{T}), \mathcal{T}) \biggr] \\
    &= \nabla_\theta \mathbb{E}_{p(\mathcal{T})} \biggl[ \mathcal{L}^{out} (U(\theta, \mathcal{T}), \mathcal{T}) \biggr] = \nabla_\theta \mathcal{M} (\theta).
\end{align*}

To show (\ref{eq:bndvar}), we fix $\mathcal{T} \in \Omega_\mathcal{T}$ and observe that by Assumption \ref{as:liphes}
\begin{equation*}
    \| \nabla_{\phi_r} \mathcal{L}^{out} (\phi_r, \mathcal{T}) \|_2 \leq L_1
\end{equation*}
and according to (\ref{eq:mamlgrad}) for each $1 \leq j \leq r$
\begin{align*}
    \| \nabla_{\phi_{j - 1}} \mathcal{L}^{out} (\phi_r, \mathcal{T}) \|_2 &= \| \nabla_{\phi_j} \mathcal{L}^{out} (\phi_r, \mathcal{T}) - \alpha \nabla^2_{\phi_{j - 1}} \mathcal{L}^{in} (\phi_{j - 1}, \mathcal{T})^\top \nabla_{\phi_j} \mathcal{L}^{out} (\phi_r, \mathcal{T}) \|_2 \\
    &= \| \nabla_{\phi_j} \mathcal{L}^{out} (\phi_r, \mathcal{T}) \|_2 + \alpha \| \nabla^2_{\phi_{j - 1}} \mathcal{L}^{in} (\phi_{j - 1}, \mathcal{T}) \|_2 \| \nabla_{\phi_j} \mathcal{L}^{out} (\phi_r, \mathcal{T}) \|_2 \\
    &\leq (1 + \alpha L_2) \| \nabla_{\phi_j} \mathcal{L}^{out} (\phi_r, \mathcal{T}) \|_2.
\end{align*}
Therefore,
\begin{align}
    \| \nabla_\theta \mathcal{L}^{out} (\phi_r, \mathcal{T}) \|_2 &= \| \nabla_{\phi_0} \mathcal{L}^{out} (\phi_r, \mathcal{T}) \|_2 \leq (1 + \alpha L_2)^r \| \nabla_{\phi_r} \mathcal{L}^{out} (\phi_r, \mathcal{T}) \|_2 \nonumber \\
    &\leq (1 + \alpha L_2)^r L_1, \nonumber \\
    \| \mathcal{G} (\theta, \mathcal{T}, \xi) \|_2 &\leq \max (\| \mathcal{G} (\theta, \mathcal{T}, 0) \|_2, \| \mathcal{G} (\theta, \mathcal{T}, 1) \|_2) \nonumber \\
    &\leq \max (L_1, \| (1 - q^{-1}) \nabla_{\phi_r} \mathcal{L}^{out} (\phi_r, \mathcal{T}) + q^{-1} \nabla_\theta \mathcal{L}^{out} (\phi_r, \mathcal{T}) \|_2) \nonumber \\
    &\leq \max (L_1, (1 - q^{-1}) \| \nabla_{\phi_r} \mathcal{L}^{out} (\phi_r, \mathcal{T}) \|_2 + q^{-1} \| \nabla_\theta \mathcal{L}^{out} (\phi_r, \mathcal{T}) \|_2) \nonumber \\
    &\leq \max (L_1, (1 - q^{-1}) L_1 + q^{-1} (1 + \alpha L_2)^r L_1 ) \nonumber \\
    &= \max (1, 1 + q^{-1} ((1 + \alpha L_2)^r - 1) ) L_1 \nonumber \\
    &= \biggl( 1 + q^{-1} ((1 + \alpha L_2)^r - 1) \biggr) L_1 \label{eq:l2last}
\end{align}
where we use $q^{-1} ((1 + \alpha L_2)^r - 1) \geq 0$. (\ref{eq:bndvar}) is obtained by squaring (\ref{eq:l2last}) and taking expectation with respect to $\mathcal{T}$ and $\xi$.
\end{proof}

\begin{lemma} \label{th:convexp}
Let $p, r, v \in \mathbb{N}, \alpha > 0, q \in (0, 1], \theta_0 \in \mathbb{R}^p$, $p(\mathcal{T})$ be a distribution on a nonempty set $\Omega_\mathcal{T}$, $\{ \gamma_k > 0 \}_{k = 1}^\infty$ be any sequence, $\mathcal{L}^{in}, \mathcal{L}^{out} : \mathbb{R}^p \times \Omega_\mathcal{T} \to \mathbb{R}$ be functions satisfying Assumption \ref{as:liphes}, and let $U: \mathbb{R}^p \times \Omega_\mathcal{T} \to \mathbb{R}^p$ be defined according to (\ref{eq:maml}), $\mathcal{M} : \mathbb{R}^p \to \mathbb{R}$ be defined according to (\ref{eq:opt}) and satisfy Assumption \ref{as:mreg}. Define $\mathcal{G}: \mathbb{R}^p \times \Omega_\mathcal{T} \times \{ 0, 1 \} \to \mathbb{R}^p$ as
\begin{equation*}
    \mathcal{G} (\theta, \mathcal{T}, x) = \nabla_{\phi_r} \mathcal{L}^{out} (\phi_r, \mathcal{T}) + (x / q) (\nabla_\theta \mathcal{L}^{out} (\phi_r, \mathcal{T}) - \nabla_{\phi_r} \mathcal{L}^{out} (\phi_r, \mathcal{T}))
\end{equation*}
where $\phi_r = U(\theta, \mathcal{T})$. Let $\{ \mathcal{T}_{k,w} \}, \{ \xi_{k,w} \}, w \in \{ 1, \dots, v \}, k \in \mathbb{N}$ be sets of i.i.d. samples from $p(\mathcal{T})$ and $\mathrm{Bernoulli} (q)$ respectively, such that $\sigma$-algebras populated by $\{ \mathcal{T}_{k,w} \}_{\forall k, w}, \{ \xi_{k,w} \}_{\forall k, w}$ are independent.
Let $\{ \theta_k \in \mathbb{R}^p \}_{k = 0}^\infty$ be a sequence where for all $k \in \mathbb{N}$ $\theta_k = \theta_{k - 1} - \frac{\gamma_k}{v} \sum_{w = 1}^v \mathcal{G} (\theta_{k - 1}, \mathcal{T}_{k,w}, \xi_{k,w})$.
Then for each $k \in \mathbb{N}$
\begin{equation} \label{eq:convres}
    \sum_{u = 1}^k \gamma_u \mathbb{E} \left[ \| \nabla_{\theta_{u - 1}} \mathcal{M} (\theta_{u - 1}) \|_2^2 \right] \leq \mathcal{M} (\theta_0) - \mathcal{M}^* + \mathcal{C} \sum_{u = 1}^k \gamma_u^2
\end{equation}
where
\begin{align*}
    \mathcal{C} &= \frac{1}{2 v} \biggl( ( 1 + q^{-1} ((1 + \alpha L_2)^r - 1) )^2 + (v - 1) (1 + \alpha L_2)^{2r} \biggr) L_1^2 \\
    &\times \biggl( L_2 (1 + \alpha L_2)^{2r} + \frac{L_1 L_3}{L_2} ((1 + \alpha L_2)^{2r} - 1) \biggr).
\end{align*}
\end{lemma}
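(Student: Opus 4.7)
The plan is to apply the standard nonconvex SGD descent-lemma analysis, instantiated with the Lipschitz constant of $\nabla\mathcal{M}$ from Lemma \ref{lemma:mlipsch} and the second-moment bound on the UFO-BLO gradient estimator from Lemma \ref{lemma:unbbnd}. First I would set $L_{\mathcal{M}} = L_2(1+\alpha L_2)^{2r} + \frac{L_1 L_3}{L_2}((1+\alpha L_2)^{2r} - 1)$ and $\bar{g}_u = \frac{1}{v}\sum_{w=1}^v \mathcal{G}(\theta_{u-1}, \mathcal{T}_{u,w}, \xi_{u,w})$, so that $\theta_u = \theta_{u-1} - \gamma_u \bar{g}_u$. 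Since Lemma \ref{lemma:mlipsch} says $\nabla\mathcal{M}$ is $L_{\mathcal{M}}$-Lipschitz, the standard quadratic upper-bound inequality for smooth functions gives
$$\mathcal{M}(\theta_u) \leq \mathcal{M}(\theta_{u-1}) - \gamma_u \nabla_{\theta_{u-1}}\mathcal{M}(\theta_{u-1})^\top \bar{g}_u + \frac{L_{\mathcal{M}} \gamma_u^2}{2}\|\bar{g}_u\|_2^2.$$

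Next I would condition on the $\sigma$-algebra $\mathcal{F}_{u-1}$ generated by $\{\mathcal{T}_{k,w}, \xi_{k,w}\}_{k < u,\, w}$, under which $\theta_{u-1}$ is measurable while $\{\mathcal{T}_{u,w}, \xi_{u,w}\}_{w=1}^v$ remain i.i.d.\ and independent of the past. By the unbiasedness statement (\ref{eq:unbgrad}) of Lemma \ref{lemma:unbbnd}, $\mathbb{E}[\bar{g}_u \mid \mathcal{F}_{u-1}] = \nabla_{\theta_{u-1}}\mathcal{M}(\theta_{u-1})$. For the quadratic term I would exploit the i.i.d.\ structure across the mini-batch index $w$ to obtain
$$\mathbb{E}\bigl[\|\bar{g}_u\|_2^2 \mid \mathcal{F}_{u-1}\bigr] = \tfrac{v-1}{v}\|\nabla_{\theta_{u-1}}\mathcal{M}(\theta_{u-1})\|_2^2 + \tfrac{1}{v}\,\mathbb{E}\bigl[\|\mathcal{G}(\theta_{u-1}, \mathcal{T}_{u,1}, \xi_{u,1})\|_2^2 \mid \mathcal{F}_{u-1}\bigr].$$
The first summand I would bound using the intermediate estimate $\|\nabla_\theta \mathcal{L}^{out}(U(\theta,\mathcal{T}),\mathcal{T})\|_2 \leq (1+\alpha L_2)^r L_1$ established inside the proof of Lemma \ref{lemma:unbbnd}, which together with Jensen's inequality gives $\|\nabla_\theta \mathcal{M}(\theta)\|_2^2 \leq L_1^2(1+\alpha L_2)^{2r}$. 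The second summand I would bound directly by (\ref{eq:bndvar}). The resulting conditional upper bound $\frac{L_1^2}{v}\bigl((1+q^{-1}((1+\alpha L_2)^r - 1))^2 + (v-1)(1+\alpha L_2)^{2r}\bigr)$ is precisely the bracket that appears inside $\mathcal{C}$.

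Plugging these two facts into the descent inequality and taking conditional expectation yields
$$\mathbb{E}[\mathcal{M}(\theta_u) \mid \mathcal{F}_{u-1}] \leq \mathcal{M}(\theta_{u-1}) - \gamma_u \|\nabla_{\theta_{u-1}}\mathcal{M}(\theta_{u-1})\|_2^2 + \mathcal{C}\gamma_u^2,$$
with $\mathcal{C}$ equal to $\tfrac{L_{\mathcal{M}}}{2}$ times the bracket above, which matches the stated constant verbatim. Taking total expectations, telescoping over $u = 1, \ldots, k$, and using $\mathbb{E}[\mathcal{M}(\theta_k)] \geq \mathcal{M}^*$ (Assumption \ref{as:mreg}) to drop the left-hand-side terminal term delivers (\ref{eq:convres}).

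I do not anticipate a serious obstacle: the only slightly delicate point is the variance/second-moment decomposition that cleanly separates the $\frac{v-1}{v}\|\nabla\mathcal{M}\|^2$ part from the $\frac{1}{v}$ second-moment part, which must be performed under conditioning so that $\theta_{u-1}$ is treated as deterministic while the fresh mini-batch $(\mathcal{T}_{u,w}, \xi_{u,w})_w$ is integrated out. Everything else is bookkeeping of constants already captured by Lemmas \ref{lemma:mlipsch} and \ref{lemma:unbbnd}.
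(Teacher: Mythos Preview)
Your proposal is correct and follows essentially the same route as the paper: both apply the descent lemma with the Lipschitz constant from Lemma~\ref{lemma:mlipsch}, condition on the past, use the unbiasedness and second-moment bound from Lemma~\ref{lemma:unbbnd}, decompose the mini-batch second moment into the $\tfrac{1}{v}$ single-sample term plus the $\tfrac{v-1}{v}\|\nabla\mathcal{M}\|_2^2$ cross term (the latter bounded via $\|\nabla\mathcal{M}(\theta)\|_2^2\le L_1^2(1+\alpha L_2)^{2r}$), take full expectation, and telescope against $\mathcal{M}^*$. The constants and the order of the steps match the paper's proof almost verbatim.
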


\begin{proof}
Denote
\begin{align*}
    A &= \biggl( L_2 (1 + \alpha L_2)^{2r} + \frac{L_1 L_3}{L_2} ((1 + \alpha L_2)^{2r} - 1) \biggr) \| \theta' - \theta'' \|_2, \\
    B &= \biggl( 1 + q^{-1} ((1 + \alpha L_2)^r - 1) \biggr)^2 L_1^2, \\
    C &= (1 + \alpha L_2)^{2 r} L_1^2.
\end{align*}
For each $\theta \in \mathbb{R}^p$ we apply Jensen's inequality to obtain
\begin{equation*}
    \| \nabla \mathcal{M} (\theta) \|_2^2 \leq \mathbb{E}_{p (\mathcal{T})} \left[\| \nabla_\theta \mathcal{L}^{out} (U(\theta, \mathcal{T}), \mathcal{T}) \|_2^2 \right] \leq C
\end{equation*}
where the second inequality is according to (\ref{eq:bndvar}) of Lemma \ref{lemma:unbbnd} when $q$ is set to $1$. Fix $u \in \mathbb{N}$ and denote $\overline{\mathcal{G}} = \frac{1}{v} \sum_{w = 1}^v \mathcal{G} (\theta_{u - 1}, \mathcal{T}_{u,w}, \xi_{u,w})$. Let $\mathcal{F}_u$ be a $\sigma$-algebra populated by $\{ \mathcal{T}_{\kappa,w} \}, \{ \xi_{\kappa,w} \}, w \in \{ 1, \dots, v \}, \kappa < u$. From Lemma \ref{lemma:unbbnd} it follows that
\begin{align*}
    &\E { \overline{\mathcal{G}} | \mathcal{F}_u } = \nabla \mathcal{M} (\theta_{u - 1}), \\
    &\E { \| \overline{\mathcal{G}} \|_2^2 | \mathcal{F}_u } = \frac{1}{v^2} \sum_{w = 1}^v \sum_{w' = 1}^v \E{ \mathcal{G} (\theta_{u - 1}, \mathcal{T}_{u,w}, \xi_{u,w})^\top \mathcal{G} (\theta_{u - 1}, \mathcal{T}_{u,w'}, \xi_{u,w'}) | \mathcal{F}_u } \\
    &= \frac{1}{v^2} (v \E { \| \mathcal{G} (\theta_{u - 1}, \mathcal{T}_{u,1}, \xi_{u,1}) \|_2^2 | \mathcal{F}_u } + v (v - 1) \| \E { \mathcal{G} (\theta_{u - 1}, \mathcal{T}_{u,1}, \xi_{u,1}) | \mathcal{F}_u } \|_2^2) \\
    &= \frac{1}{v} (\E { \| \mathcal{G} (\theta_{u - 1}, \mathcal{T}_{u,1}, \xi_{u,1}) \|_2^2 | \mathcal{F}_u } + (v - 1) \| \mathcal{M} (\theta_{u - 1}) \|_2^2) \\
    &\leq \frac{1}{v} (B + (v - 1) C)
\end{align*}

$A$ is a Lipschitz constant for $\mathcal{M}$'s gradients (Lemma \ref{lemma:mlipsch}). We apply Inequality 4.3 from \cite{bottou} to obtain that for all $\theta', \theta'' \in \mathbb{R}^p$
\begin{equation*}
    \mathcal{M} (\theta') \leq \mathcal{M} (\theta'') + \nabla \mathcal{M} (\theta'')^\top (\theta' - \theta'') + \frac{1}{2} A \| \theta' - \theta'' \|_2^2.
\end{equation*}
By setting $\theta' = \theta_u$, $\theta'' = \theta_{u - 1}$ we deduce that
\begin{equation*}
    \mathcal{M} (\theta_u) \leq \mathcal{M} (\theta_{u - 1}) - \gamma_u \nabla \mathcal{M} (\theta_{u - 1})^\top \overline{\mathcal{G}} + \frac{1}{2} \gamma_u^2 A \| \overline{\mathcal{G}} \|_2^2.
\end{equation*}
Then
\begin{align}
    \E { \mathcal{M} (\theta_u) | \mathcal{F}_u } &\leq \mathcal{M} (\theta_{u - 1}) - \gamma_u \nabla \mathcal{M} (\theta_{u - 1})^\top \E { \overline{\mathcal{G}} | \mathcal{F}_u } + \frac{1}{2} \gamma_u^2 A \E { \| \overline{\mathcal{G}} \|_2^2 | \mathcal{F}_u } \nonumber \\
    &\leq \mathcal{M} (\theta_{u - 1}) - \gamma_u \| \nabla \mathcal{M} (\theta_{u - 1}) \|_2^2 + \frac{1}{2 v} \gamma_u^2 (B + (v - 1) C). \label{eq:sgdnonlin}
\end{align}
Take full expectation of (\ref{eq:sgdnonlin}) and observe that $\mathcal{C} = \frac{1}{2 v} (B + (v - 1) C)$:
\begin{equation*}
    \E{\mathcal{M} (\theta_u)} \leq \E{\mathcal{M} (\theta_{u - 1})} - \gamma_u \E{\| \nabla \mathcal{M} (\theta_{u - 1}) \|_2^2} + \gamma_u^2 \mathcal{C}
\end{equation*}
which is equivalent to
\begin{equation} \label{eq:sgdnonlinexp}
    \gamma_u \E{\| \nabla \mathcal{M} (\theta_{u - 1}) \|_2^2} \leq \E{ \mathcal{M} (\theta_{u - 1})} - \E{\mathcal{M} (\theta_u)} + \gamma_u^2 \mathcal{C}.
\end{equation}
Sum inequalities (\ref{eq:sgdnonlinexp}) for all $1 \leq u \leq k$:
\begin{align*}
    \sum_{u = 1}^k \gamma_u \E{\| \nabla \mathcal{M} (\theta_{u - 1}) \|_2^2} &\leq \mathcal{M} (\theta_0) - \E{\mathcal{M} (\theta_k)} + \mathcal{C} \sum_{u = 1}^k\gamma_u^2 \\
    &\leq \mathcal{M} (\theta_0) - \mathcal{M}^* + \mathcal{C} \sum_{u = 1}^k\gamma_u^2,
\end{align*}
and the proof is concluded.
\end{proof}

\begin{proof}[Theorem \ref{th:conv} proof]

Under conditions of the theorem results of Lemma \ref{th:convexp} are true.

First, we prove 1. If $\sum_{k = 1}^\infty \gamma_k^2 < \infty$, then the right-hand side of (\ref{eq:convres}) converges to a finite value when $k \to \infty$. Therefore, the left-hand side also converges to a finite value. Suppose the statement of 1 is false. Then there exists $k_0 \in \mathbb{N}, A > 0$ such that $\forall u \geq k_0 : \| \nabla_{\theta_{u - 1}} \mathcal{M} (\theta_{u - 1}) \|_2^2 > A$. But then for all $k \geq k_0$
\begin{equation*}
    \sum_{u = 1}^k \gamma_u \| \nabla_{\theta_{u - 1}} \mathcal{M} (\theta_{u - 1}) \|_2^2 \geq A \sum_{u = k_0}^k \gamma_u \to \infty
\end{equation*}
when $k \to \infty$, which is a contradiction. Therefore, 1 is true.

Next, we prove 2. Observe that
\begin{equation*}
    \min_{0 \leq u < k} \E{\| \nabla_{\theta_u} \mathcal{M} (\theta_u) \|_2^2} \sum_{u = 1}^k \gamma_u \leq \sum_{u = 1}^k \gamma_u \E{\| \nabla_{\theta_{u - 1}} \mathcal{M} (\theta_{u - 1}) \|_2^2} \leq \mathcal{M} (\theta_0) - \mathcal{M}^* + \mathcal{C} \sum_{u = 1}^k \gamma_u^2.
\end{equation*}
Divide by $\sum_{u = 1}^k \gamma_u$:
\begin{equation*}
    \min_{0 \leq u < k} \E{ \| \nabla_{\theta_u} \mathcal{M} (\theta_u) \|_2^2} \leq \frac{1}{\sum_{u = 1}^k \gamma_u} (\mathcal{M}(\theta_0) - \mathcal{M}^*) + \mathcal{C} \frac{1}{\sum_{u = 1}^k \gamma_u} \cdot \sum_{u = 1}^k \gamma_u^2 .
\end{equation*}
2 is satisfied by observing that
\begin{equation*}
    \sum_{u = 1}^k \gamma_u = \sum_{u = 1}^k u^{-0.5} = \Omega (k^{0.5}), \quad \sum_{u = 1}^k \gamma_u^2 = \sum_{u = 1}^k u^{-1} = O (\log k) = o (k^\epsilon)
\end{equation*}
for any $\epsilon > 0$.
\end{proof}

\subsection{Theorem \ref{th:counter}}

\begin{proof}
Consider a set $\Omega_\mathcal{T}$ consisting of two elements: $\Omega_\mathcal{T} = \{ \mathcal{T}_1, \mathcal{T}_2 \}$. Define $p(\mathcal{T})$ so that
\begin{equation*}
    \mathbb{P}_{p(\mathcal{T})}(\mathcal{T} = \mathcal{T}_1) = \mathbb{P}_{p(\mathcal{T})}(\mathcal{T} = \mathcal{T}_2) = \frac{1}{2}.
\end{equation*}
Choose arbitrary numbers $0 < a_1, a_2 < \frac{1}{\alpha}$, $a_1 \neq a_2$ and set $b_1 = 0$. Since $a_1 \neq a_2$, $(1 - \alpha a_1)/(1 - \alpha a_2) \neq 1$ and, consequently,
\begin{equation*}
    \biggl( \frac{1 - \alpha a_1}{1 - \alpha a_2} \biggr)^r \neq \biggl( \frac{1 - \alpha a_1}{1 - \alpha a_2} \biggr)^{2 r}.
\end{equation*}
Multiply by $\frac{a_1}{a_2} \neq 0$:
\begin{equation} \label{eq:ineq1}
    \frac{a_1}{a_2} \cdot \biggl( \frac{1 - \alpha a_1}{1 - \alpha a_2} \biggr)^r \neq \frac{a_1}{a_2} \cdot \biggl( \frac{1 - \alpha a_1}{1 - \alpha a_2} \biggr)^{2 r}.
\end{equation}
From (\ref{eq:ineq1}) and since $\frac{a_1}{a_2} (\frac{1 - \alpha a_1}{1 - \alpha a_2})^r, \frac{a_1}{a_2} (\frac{1 - \alpha a_1}{1 - \alpha a_2})^{2 r} > 0$ it follows that
\begin{equation*}
    \frac{\frac{a_1}{a_2} (\frac{1 - \alpha a_1}{1 - \alpha a_2})^{2 r} + 1}{\frac{a_1}{a_2} (\frac{1 - \alpha a_1}{1 - \alpha a_2})^r + 1} - 1 \neq 0.
\end{equation*}
Multiply inequality by $(1 - \alpha a_2)^{2 r} \neq 0$ and numerator/denominator by $a_2 (1 - \alpha a_2)^r \neq 0$:
\begin{equation*}
    \frac{a_1 (1 - \alpha a_1)^{2 r} + a_2 (1 - \alpha a_2)^{2 r}}{a_1 (1 - \alpha a_1)^r + a_2 (1 - \alpha a_2)^r} (1 - \alpha a_2)^r - (1 - \alpha a_2)^{2 r} \neq 0.
\end{equation*}
Because of the inequality above, we can define a number $b_2$ as
\begin{equation} \label{eq:b2def}
    b_2 = 2 \sqrt{2 D} \biggl| \frac{a_1 (1 - \alpha a_1)^{2 r} + a_2 (1 - \alpha a_2)^{2 r}}{a_1 (1 - \alpha a_1)^r + a_2 (1 - \alpha a_2)^r} (1 - \alpha a_1)^r - (1 - \alpha a_2)^{2 r} \biggr|^{-1} > 0
\end{equation}
and select arbitrary number $A$ so that
\begin{equation} \label{eq:adef}
    A > | \frac{b_1}{a_1} - \frac{b_2}{a_2} | .
\end{equation}
Consider two functions $f_i (x)$, $f_i: \mathbb{R} \to \mathbb{R}$, $i \in \{ 1, 2 \}$ defined as follows (denote $z_i = z_i (x) = | x - \frac{b_i}{a_i} |$)
\begin{equation} \label{eq:fdef}
    f_i (x) = 
    \begin{cases}
        \frac{1}{2} a_i z_i^2 & \text{if } z_i \leq A \\
        - \frac{1}{6} a_i (z_i - A)^3 + \frac{1}{2} a_i (z_i - A)^2 + a_i A z_i - \frac{1}{2} a_i A^2 & \text{if } A < z_i \leq A + 1 \\
        (\frac{1}{2} a_i + a_i A) z_i - \frac{1}{6} a_i - \frac{1}{2} a_i A^2 - \frac{1}{2} a_i A & \text{if } A + 1 < z_i
    \end{cases}.
\end{equation}
It is easy to check that for $i \in \{ 1, 2 \}$ $f_i (x)$ is twice differentiable with a global minimum at $\frac{b_i}{a_i}$. The following expressions apply for the first and second derivative:
\begin{align}
    f_i' (x) &= 
    \begin{cases}
        a_i x - b_i & \text{if } z_i \leq A \\
        \biggl( - \frac{1}{2} a_i (z_i - A)^2 + a_i z_i \biggr) \textrm{sign} (x - \frac{b_i}{a_i}) & \text{if } A < z_i \leq A + 1 \\
        (\frac{1}{2} a_i + a_i A) \textrm{sign} (x - \frac{b_i}{a_i}) & \text{if } A + 1 < z_i
    \end{cases}, \label{eq:fdif1} \\
    f_i'' (x) &= 
    \begin{cases}
        a_i & \text{if } z_i \leq A \\
        - a_i z_i + a_i + a_i A & \text{if } A < z_i \leq A + 1 \\
        0 & \text{if } A + 1 < z_i
    \end{cases}. \label{eq:fdif2}
\end{align}
From (\ref{eq:fdif1}-\ref{eq:fdif2}) it follows that each $f_i$ has bounded, Lipschitz-continuous gradients and Hessians. Define $\mathcal{L}^{in} (\phi, \mathcal{T}_i) = \mathcal{L}^{out} (\phi, \mathcal{T}_i) = f_i (\phi^{(1)})$ for $i \in \{ 1, 2 \}$, where $\phi^{(1)}$ denotes a first element of $\phi$, then Assumption \ref{as:liphes} is satisfied. Since $\Omega_\mathcal{T}$ is finite, Assumption \ref{as:mreg} is also satisfied.

Let $I = [\frac{b_2}{a_2} - A, \frac{b_1}{a_1} + A]$. Observe that from (\ref{eq:adef}) it follows that $\frac{b_1}{a_1}, \frac{b_2}{a_2} \in I$ and $I \subseteq [\frac{b_i}{a_i} - A, \frac{b_i}{a_i} + A]$ for $i \in \{ 1, 2 \}$, i.e. $I$ corresponds to a quadratic part of both $f_1 (x)$ and $f_2 (x)$. If $x \in I$, then for $i \in \{ 1, 2 \}$
\begin{align}
    x - \alpha f'_i (x) &= x - \alpha (a_i x - b_i) = (1 - \alpha a_i) x + \alpha b_i \nonumber \\
    &= (1 - \alpha a_i) \cdot x + \alpha a_i \cdot \frac{b_i}{a_i} \in [\min(x, \frac{b_i}{a_i}), \max(x, \frac{b_i}{a_i})] \subseteq I \label{eq:iprop}
\end{align}
since $x - \alpha f'_i (x)$ is a convex combination of $x$ and $\frac{b_i}{a_i}$ ($0 < \alpha a_i, 1 - \alpha a_i < 1$). From (\ref{eq:iprop}) and the definition of $\mathcal{L}^{in} (\phi, \mathcal{T}), \mathcal{L}^{out} (\phi, \mathcal{T})$ it follows that if $\phi_0, \dots, \phi_r$ is a rollout of inner GD (\ref{eq:maml}) for task $\mathcal{T}_i$ and $\phi_0^{(1)} \in I$, then $\phi_1^{(1)}, \dots, \phi_r^{(1)} \in I$ and, hence,
\begin{gather}
    \nabla_{\phi_r} \mathcal{L}^{out} (\phi_r, \mathcal{T}_i)^{(1)} = f_i' (\phi_r^{(1)}) = a_i \phi_r^{(1)} - b_i, \nonumber \\
    \forall j \in \{ 1, \dots, r \}: \phi_j^{(1)} = (1 - \alpha a_i) \phi_{j - 1}^{(1)} + \alpha b_i. \label{eq:mamlcnt}
\end{gather}
From (\ref{eq:mamlcnt}) we derive that
\begin{gather}
    \phi_j^{(1)} - \frac{b_i}{a_i} = (1 - \alpha a_i) (\phi_{j - 1}^{(1)} - \frac{b_i}{a_i}), \quad \phi_r^{(1)} - \frac{b_i}{a_i} = (1 - \alpha a_i)^r (\phi_0^{(1)} - \frac{b_i}{a_i}), \nonumber \\
    \phi_r^{(1)} = (1 - \alpha a_i)^r (\phi_0^{(1)} - \frac{b_i}{a_i}) + \frac{b_i}{a_i}, \nonumber \\
    \nabla_{\phi_r} \mathcal{L}^{out} (\phi_r, \mathcal{T}_i)^{(1)} = a_i \biggl( (1 - \alpha a_i)^r (\phi_0^{(1)} - \frac{b_i}{a_i}) + \frac{b_i}{a_i} \biggr) - b_i = a_i (1 - \alpha a_i)^r ( \phi_0^{(1)} - \frac{b_i}{a_i} ) . \label{eq:fomamlcnt}
\end{gather}

From (\ref{eq:step}) it follows that there exists a deterministic number $k_0 \in \mathbb{N}$ such that for all $k \geq k_0$
\begin{equation} \label{eq:k0def1}
    \gamma_k < \frac{1}{2} \min_{i \in \{ 1, 2 \} } \frac{1}{a_i (1 + \alpha a_i)^r} .
\end{equation}
If (\ref{eq:k0def1}) holds, then it also holds that
\begin{equation} \label{eq:k0def2}
    \gamma_k < \min_{i \in \{ 1, 2 \} } \frac{1}{a_i (1 + \alpha a_i)^r}, \quad \gamma_k < \min_{i \in \{ 1, 2 \} } \frac{1}{a_i (1 - \alpha a_i)^r}.
\end{equation}
For any $k \geq k_0$ the following cases are possible:
\begin{enumerate}
    \item Case 1: $\theta_{k - 1}^{(1)} \in I$. An identity (\ref{eq:fomamlcnt}) allows to write that for $i \in \{ 1, 2 \}$
\begin{equation} \label{eq:fomamloutcnt}
    \mathcal{G}_{FO} (\theta_{k - 1}, \mathcal{T}_i)^{(1)} = a_i (1 - \alpha a_i)^r ( \theta_{k - 1}^{(1)} - \frac{b_i}{a_i} ).
\end{equation}
For $i \in \{ 1, 2 \}$ let random number $v_i \leq v$ denote a number of tasks in $\mathcal{T}_{k,1}, \dots, \mathcal{T}_{k,v}$ which coincide with $\mathcal{T}_i$. Then from (\ref{eq:fomamloutcnt}) we deduce that
\begin{align*}
    \theta_k^{(1)} &= \theta_{k - 1}^{(1)} - \gamma_k \sum_{i = 1}^2 \frac{v_i}{v} a_i (1 - \alpha a_i)^r ( \theta_{k - 1}^{(1)} - \frac{b_i}{a_i} ) \\
    &= (1 - \gamma_k \sum_{i = 1}^2 \frac{v_i}{v} a_i (1 - \alpha a_i)^r ) \cdot \theta_{k - 1}^{(1)} + \gamma_k \frac{v_1}{v} a_1 (1 - \alpha a_1)^r \cdot \frac{b_1}{a_1} \\
    &+ \gamma_k \frac{v_2}{v} a_2 (1 - \alpha a_2)^r \cdot \frac{b_2}{a_2} \\
    &\in [\min(\theta_{k - 1}^{(1)}, \frac{b_1}{a_1}, \frac{b_2}{a_2}), \max(\theta_{k - 1}^{(1)}, \frac{b_1}{a_1}, \frac{b_2}{a_2})] \subseteq I
\end{align*}
since $\theta_k^{(1)}$ is a convex combination of $\theta_{k - 1}^{(1)}, \frac{b_1}{a_1}, \frac{b_2}{a_2}$. Indeed, due to (\ref{eq:k0def2})
\begin{equation*}
    0 \leq (1 - \gamma_k \sum_{i = 1}^2 \frac{v_i}{v} a_i (1 - \alpha a_i)^r ), \gamma_k \frac{v_1}{v} a_1 (1 - \alpha a_1)^r, \gamma_k \frac{v_2}{v} a_2 (1 - \alpha a_2)^r \leq 1
\end{equation*}
and
\begin{equation*}
    (1 - \gamma_k \sum_{i = 1}^2 \frac{v_i}{v} a_i (1 - \alpha a_i)^r ) + \gamma_k \frac{v_1}{v} a_1 (1 - \alpha a_1)^r + \gamma_k \frac{v_2}{v} a_2 (1 - \alpha a_2)^r = 1.
\end{equation*}
As a result of this Case we conclude that if $k \geq k_0$ and $\theta_{k - 1}^{(1)} \in I$, then for all $k' \geq k$ it also holds that $\theta_{k'}^{(1)} \in I$.
\item Case 2: $\theta_{k - 1}^{(1)} > \frac{b_1}{a_1} + A$. From (\ref{eq:fdif2}) observe that for $i \in \{ 1, 2 \}$ and any $x \in \mathbb{R}$ $f''_i (x) \leq a_i$. Hence, $f'_i$'s Lipschitz constant is $a_i$. Let $\phi_0, \dots, \phi_r$ and $\overline{\phi}_0, \dots, \overline{\phi}_r$ be two inner-GD (\ref{eq:maml}) rollouts for task $\mathcal{T}_i$ and $\phi_0^{(1)} > \overline{\phi}_0^{(1)}$. For $j \in \{ 1, \dots, r \}$ suppose that $\phi_{j - 1}^{(1)} > \overline{\phi}_{j - 1}^{(1)}$. Then
\begin{align*}
    \phi_j^{(1)} - \overline{\phi}_j^{(1)} &= \phi_{j - 1}^{(1)} - \overline{\phi}_{j - 1}^{(1)} - \alpha (f_i' (\phi_{j - 1}^{(1)}) - f_i'(\overline{\phi}_{j - 1}^{(1)})) \\
    &\geq \phi_{j - 1}^{(1)} - \overline{\phi}_{j - 1}^{(1)} - \alpha | f_i' (\phi_{j - 1}^{(1)}) - f_i'(\overline{\phi}_{j - 1}^{(1)}) | \\
    &\geq \phi_{j - 1}^{(1)} - \overline{\phi}_{j - 1}^{(1)} - \alpha a_i | \phi_{j - 1}^{(1)} - \overline{\phi}_{j - 1}^{(1)} | \\
    &> \phi_{j - 1}^{(1)} - \overline{\phi}_{j - 1}^{(1)} - | \phi_{j - 1}^{(1)} - \overline{\phi}_{j - 1}^{(1)} | \\
    &= 0
\end{align*}
or $\phi_j^{(1)} > \overline{\phi}_j^{(1)}$ where we use Lipschitz continuity of $f'_i$ and that $\alpha a_i < 1$ by the choice of $a_1, a_2$. Therefore, since $\phi_0^{(1)} > \overline{\phi}_0^{(1)}$, $\phi_1^{(1)} > \overline{\phi}_1^{(1)}$ and so on, eventually $\phi_r^{(1)} > \overline{\phi}_r^{(1)}$. Observe that $f_i' (x)$ is a strictly monotonously increasing function, therefore $f_i' (\phi_r^{(1)}) > f_i' (\overline{\phi}_r^{(1)})$. To sum up:
\begin{equation} \label{eq:mon}
    f_i' (\phi_r^{(1)}) > f_i' (\overline{\phi}_r^{(1)}) \quad \text{when } \phi_0^{(1)} > \overline{\phi}_0^{(1)}.
\end{equation}

Set $\overline{\phi}_0^{(1)} = \frac{b_i}{a_i}$, then $f'(\overline{\phi}_{j - 1}^{(1)}) = 0$ and $\overline{\phi}_1^{(1)} = \overline{\phi}_0^{(1)} - \alpha \cdot 0 = \overline{\phi}_0^{(1)}$ and so on, eventually $\overline{\phi}_r^{(1)} = \frac{b_i}{a_i}$ and $f'(\overline{\phi}_r^{(1)}) = 0$. Therefore, if $\phi_0^{(1)} = \frac{b_1}{a_1} + A > \max ( \frac{b_1}{a_1}, \frac{b_2}{a_2} )$ then $f_i' (\phi_r^{(1)}) > f_i' (\overline{\phi}_r^{(1)}) = 0$. For $i \in \{ 1, 2 \}$ denote a deterministic value of $f_i' (\phi_r^{(1)})$ by $B_i  > 0$. By setting $\phi_0^{(1)} = \theta_{k - 1}^{(1)}, \overline{\phi}_0^{(1)} = \frac{b_1}{a_1} + A$ and using (\ref{eq:mon}) we obtain:
\begin{equation} \label{eq:glb}
    \mathcal{G}_{FO} (\theta_{k - 1}, \mathcal{T}_i)^{(1)} = f'_i (\phi_r^{(1)}) > f'_i (\overline{\phi}_r^{(1)}) = B_i \geq B > 0 .
\end{equation}
where we denote $B = \min (B_1, B_2)$. 

In addition, set $\phi_0^{(1)} = \theta_{k - 1}^{(1)}, \overline{\phi}_0^{(1)} = \frac{b_i}{a_i}$. Then
\begin{align*}
    \mathcal{G}_{FO} (\theta_{k - 1}, \mathcal{T}_i)^{(1)} &= | \mathcal{G}_{FO} (\theta_{k - 1}, \mathcal{T}_i)^{(1)} | = | f'_i (\phi_r^{(1)}) - 0 | = | f'_i (\phi_r^{(1)}) - f'_i (\overline{\phi}_r^{(1)}) | \\
    &\leq a_i | \phi_r^{(1)} - \overline{\phi}_r^{(1)} | = a_i | \phi_{r - 1}^{(1)} - \overline{\phi}_{r - 1}^{(1)} - \alpha ( f'_i (\phi_{r - 1}^{(1)}) - f'_i (\overline{\phi}_{r - 1}^{(1)}) ) | \\
    &\leq a_i | \phi_{r - 1}^{(1)} - \overline{\phi}_{r - 1}^{(1)} | + \alpha a_i | f'_i (\phi_{r - 1}^{(1)}) - f'_i (\overline{\phi}_{r - 1}^{(1)}) | \\
    &\leq a_i (1 + \alpha a_i) | \phi_{r - 1}^{(1)} - \overline{\phi}_{r - 1}^{(1)} | \\
    &\dots \\
    &\leq a_i (1 + \alpha a_i)^r | \phi_0^{(1)} - \overline{\phi}_0^{(1)} | \\
    &= a_i (1 + \alpha a_i)^r | \theta_{k - 1}^{(1)} - \frac{b_i}{a_i} |.
\end{align*}
Since $\theta_{k - 1}^{(1)} > \frac{b_1}{a_1} + A > \max (\frac{b_1}{a_1}, \frac{b_2}{a_2})$, we derive that
\begin{align*}
    \mathcal{G}_{FO} (\theta_{k - 1}, \mathcal{T}_i)^{(1)} &\leq a_i (1 + \alpha a_i)^r (\theta_{k - 1}^{(1)} - \frac{b_i}{a_i}) \leq \frac{1}{\gamma_k} (\theta_{k - 1}^{(1)} - \frac{b_i}{a_i}) \\
    &\leq \max_{i' \in \{ 1, 2 \}} \frac{1}{\gamma_k} (\theta_{k - 1}^{(1)} - \frac{b_{i'}}{a_{i'}}) = \frac{1}{\gamma_k} (\theta_{k - 1}^{(1)} - \min_{i' \in \{ 1, 2 \}} \frac{b_{i'}}{a_{i'}}) \\
    &= \frac{1}{\gamma_k} (\theta_{k - 1}^{(1)} - \frac{b_1}{a_1})
\end{align*}
where we use (\ref{eq:k0def2}) and the fact that $\frac{b_1}{a_1} = 0, \frac{b_2}{a_2} > 0$. Next, we deduce that
\begin{equation*}
    \frac{1}{v} \sum_{w = 1}^v \mathcal{G}_{FO} (\theta_{k - 1}, \mathcal{T}_{k,w})^{(1)} \leq \frac{1}{v} \sum_{w = 1}^v \frac{1}{\gamma_k} (\theta_{k - 1}^{(1)} - \frac{b_1}{a_1}) = \frac{1}{\gamma_k} (\theta_{k - 1}^{(1)} - \frac{b_1}{a_1})
\end{equation*}
and
\begin{equation} \label{eq:b1}
    \theta_k^{(1)} = \theta_{k - 1}^{(1)} - \frac{\gamma_k}{v} \sum_{w = 1}^v \mathcal{G}_{FO} (\theta_{k - 1}, \mathcal{T}_{k,w})^{(1)} \geq \theta_{k - 1}^{(1)} - \frac{\gamma_k}{\gamma_k} (\theta_{k - 1}^{(1)} - \frac{b_1}{a_1}) = \frac{b_1}{a_1}.
\end{equation}
On the other hand, from (\ref{eq:glb}) we observe that
\begin{equation*}
    \frac{1}{v} \sum_{w = 1}^v \mathcal{G}_{FO} (\theta_{k - 1}, \mathcal{T}_{k,w})^{(1)} > \frac{1}{v} \sum_{w = 1}^v B = B
\end{equation*}
and
\begin{equation} \label{eq:b2}
    \theta_k^{(1)} = \theta_{k - 1}^{(1)} - \frac{\gamma_k}{v} \sum_{w = 1}^v \mathcal{G}_{FO} (\theta_{k - 1}, \mathcal{T}_{k,w})^{(1)} < \theta_{k - 1}^{(1)} - \gamma_k B .
\end{equation}
According to (\ref{eq:step}) there exists a number $k_1 > k$ such that
\begin{equation} \label{eq:k1def}
    \sum_{k' = k}^{k_1 - 1} \gamma_{k'} > \frac{1}{B} (\theta^{(1)}_{k - 1} - \frac{b_1}{a_1}).
\end{equation}
In addition, let $k_1$ be a minimal such number. Suppose that for all $k \leq k' \leq k_1$ $\theta_{k' - 1}^{(1)} > \frac{b_1}{a_1} + A$. Then by applying bound (\ref{eq:b2}) for all $k = k'$ we obtain that
\begin{equation*}
    \theta_{k_1}^{(1)} < \theta_{k_1 - 1}^{(1)} - \gamma_{k_1} B < \dots < \theta_{k - 1}^{(1)} - B \sum_{k' = k}^{k_1} \gamma_{k'} < \frac{b_1}{a_1}
\end{equation*}
which is a contradiction with the bound (\ref{eq:b1}) applied to $k = k_1$. Therefore, there exists $k \leq k' < k_1$ such that $\theta_{k'}^{(1)} \leq \frac{b_1}{a_1} + A$. Then there exists a number
\begin{equation} \label{eq:k2def}
    k_2 = \min_{k \leq k' < k_1, \theta_{k'}^{(1)} \leq \frac{b_1}{a_1} + A} k' .
\end{equation}
Hence, $\theta_{k_2 - 1}^{(1)} > \frac{b_1}{a_1} + A$ and by applying bound (\ref{eq:b1}) to $k = k_2$ we conclude that $\theta_{k_2}^{(1)} \geq \frac{b_1}{a_1}$. Averall:
\begin{equation*}
    \theta_{k_2}^{(1)} \in [\frac{b_1}{a_1}, \frac{b_1}{a_1} + A] \subseteq I.
\end{equation*}
As shown in Case 1, for all $k' > k_2$ (including $k_1$) it also holds that $\theta_{k'}^{(1)} \in I$. To summarize, we have proven that there exists a deterministic number $B > 0$ such that for $k_1$ defined by (\ref{eq:k1def}) $\theta_{k'}^{(1)} \in I$ for all $k' \geq k_1$.

\item Case 3: $\theta_{k - 1}^{(1)} < \frac{b_2}{a_2} - A$. Using a symmetric argument as in Case 2 it can be shown that there exists a deterministic number $C > 0$ so that the following holds. According to (\ref{eq:step}) there exists $k_3 \geq k$ such that
\begin{equation} \label{eq:k3def}
    \sum_{k' = k}^{k_3 - 1} \gamma_{k'} > \frac{1}{C} (\frac{b_2}{a_2} - \theta^{(1)}_{k - 1}).
\end{equation}
In addition, let $k_3$ be a minimal such number. Then $\theta_{k'}^{(1)} \in I$ for all $k' \geq k_3$.
\end{enumerate}

Since $p(\mathcal{T})$ is a discrete distribution, there only exists a finite number of outcomes for a set of random variables $\{ \mathcal{T}_{k,w} \}_{k < k_0, 1 \leq w \leq v }$. Therefore, there is only a finite set of possible outcomes of $\theta_{k_0 - 1}^{(1)}$ random variable. Consequently, there exists a deterministic number $E > 0$ such that $| \theta_{k_0 - 1}^{(1)} | < E$. According to (\ref{eq:step}) there exist deterministic numbers $k_4, k_5 \geq k_0$ such that
\begin{equation} \label{eq:k45def}
    \sum_{k' = k_0}^{k_4 - 1} \gamma_{k'} > \frac{1}{B} (E - \frac{b_1}{a_1}), \quad \sum_{k' = k_0}^{k_5 - 1} \gamma_{k'} > \frac{1}{C} (\frac{b_2}{a_2} + E).
\end{equation}
and let $k_6 = \max (k_4, k_5)$ -- also a deterministic number. Let $k_1, k_3$ be random numbers from Cases 2, 3 applied to $k = k_0$. Then from (\ref{eq:k45def}) and $E$'s definition it follows that $k_1, k_3 \leq k_6$. In addition to that, $k_0 \leq k_6$ from $k_6$'s definition. As a result of all Cases we conclude that for any $k \geq k_6$ $\phi_k^{(1)} \in I$.

Denote
\begin{gather*}
    a^* = \frac{1}{2} ( a_1 (1 - \alpha a_1)^r + a_2 (1 - \alpha a_2)^r ), \quad b^* = \frac{1}{2} (b_1 (1 - \alpha a_1)^r + b_2 (1 - \alpha a_2)^r ),  \quad x^* = \frac{b^*}{a^*}
\end{gather*}
and consider arbitrary $k > k_6$. Denote $\overline{\mathcal{G}} = \frac{1}{v} \sum_{w = 1}^v \mathcal{G}_{FO} (\theta_{k - 1}, \mathcal{T}_{k,w})$ and let $\mathcal{F}_k$ be a $\sigma$-algebra populated by $\{ \mathcal{T}_{\kappa,w} \}, w \in \{ 1, \dots, v \}, \kappa < k$. From Equation (\ref{eq:fomamloutcnt}) we conclude that
\begin{equation*}
    \E{ \overline{\mathcal{G}}^{(1)} | \mathcal{F}_k } = \frac{1}{v} \sum_{w = 1}^v \E {\mathcal{G}_{FO} (\theta_{k - 1}, \mathcal{T}_{k,w})^{(1)} | \mathcal{F}_k } = a^* \theta_{k - 1}^{(1)} - b^*
\end{equation*}
Outer-loop update leads to an expression:
\begin{equation*}
    \theta_k^{(1)} = \theta_{k - 1}^{(1)} - \gamma_k \overline{\mathcal{G}}^{(1)}.
\end{equation*}
Subtract $x^*$:
\begin{equation*}
    \theta_k^{(1)} - x^* = \theta_{k - 1}^{(1)} - x^* - \gamma_k \overline{\mathcal{G}}^{(1)}.
\end{equation*}
Take a square:
\begin{align*}
    ( \theta_k^{(1)} - x^* )^2 &= ( \theta_{k - 1}^{(1)} - x^* - \gamma_k \overline{\mathcal{G}}^{(1)} )^2
    &= (\theta_{k - 1}^{(1)} - x^*)^2 - 2 \gamma_k (\theta_{k - 1}^{(1)} - x^*) \overline{\mathcal{G}}^{(1)} + \gamma_k^2 \overline{\mathcal{G}}^{(1) 2} .
\end{align*}
Take expectation conditioned on $\mathcal{F}_k$:
\begin{align*}
    \E{ ( \theta_k^{(1)} - x^* )^2 | \mathcal{F}_k} &= ( \theta_{k - 1}^{(1)} - x^* )^2 - 2 \gamma_k (\theta_{k - 1}^{(1)} - x^*) \E{ \overline{\mathcal{G}}^{(1)} | \mathcal{F}_k} + \gamma_k^2 \E {  \left(\overline{\mathcal{G}}^{(1)}\right)^{2} | \mathcal{F}_k} \\
    &= ( \theta_{k - 1}^{(1)} - x^* )^2 - 2 \gamma_k (\theta_{k - 1}^{(1)} - x^*) (a^* \theta_{k - 1}^{(1)} - b^*) + \gamma_k^2 \E { \left(\overline{\mathcal{G}}^{(1)} \right)^{2} | \mathcal{F}_k} \\
    &= ( \theta_{k - 1}^{(1)} - x^* )^2 - 2 \gamma_k a^* (\theta_{k - 1}^{(1)} - x^*) ( \theta_{k - 1}^{(1)} - \frac{b^*}{a^*}) + \gamma_k^2 \E { \left(\overline{\mathcal{G}}^{(1)}\right)^{2} | \mathcal{F}_k} \\
    &= (1 - 2 \gamma_k a^*) ( \theta_{k - 1}^{(1)} - x^* )^2 + \gamma_k^2 \E{ \left(\overline{\mathcal{G}}^{(1)}\right)^{2} | \mathcal{F}_k}.
\end{align*}
For $i \in \{ 1, 2 \}$, $\mathcal{G}_{FO} (\theta_{k - 1}, \mathcal{T}_i^{(1)})$ depends linearly on $\theta_{k - 1}^{(1)}$ (\ref{eq:fomamloutcnt}) and, therefore, is bounded on $\theta_{k - 1}^{(1)} \in I$. Hence, $\overline{\mathcal{G}}^{(1) 2}$ is also bounded by a deterministic number $F > 0$: $\overline{\mathcal{G}}^{(1) 2} < F$. Then:
\begin{equation*}
    \E{ ( \theta_k^{(1)} - x^* )^2 | \mathcal{F}_k} \leq (1 - 2 \gamma_k a^*) ( \theta_{k - 1}^{(1)} - x^* )^2 + \gamma_k^2 F.
\end{equation*}
Take a full expectation:
\begin{equation*}
    \E{( \theta_k^{(1)} - x^* )^2} \leq (1 - 2 \gamma_k a^*) \mathbb{E} ( \theta_{k - 1}^{(1)} - x^* )^2 + \gamma_k^2 F,
\end{equation*}
and denote $y_k = \E{ ( \theta_k^{(1)} - x^* )^2}$:
\begin{equation} \label{eq:yineq}
    y_k \leq (1 - 2 \gamma_k a^*) y_{k - 1} + \gamma_k^2 F.
\end{equation}
Now, we prove that $\lim_{k \to \infty} y_k = 0$. Indeed, consider arbitrary $\epsilon > 0$. According to (\ref{eq:step}) there exists $k_\epsilon > k_6$ such that $\forall k' \geq k_\epsilon : \gamma_{k'} \leq \frac{a^* \epsilon}{F}$. As a result of (\ref{eq:yineq}) for every $k \geq k_\epsilon$ it holds
\begin{equation*}
    y_k \leq (1 - 2 \gamma_k a^*) y_{k - 1} + \gamma_k^2 F \leq (1 - 2 \gamma_k a^*) y_{k - 1} + \gamma_k a^* \epsilon.
\end{equation*}
Subtract $\frac{\epsilon}{2}$:
\begin{equation} \label{eq:yineq2}
    y_k - \frac{\epsilon}{2} \leq (1 - 2 \gamma_k a^*) ( y_{k - 1} - \frac{\epsilon}{2} ) .
\end{equation}
Observe that by (\ref{eq:k0def1}), $a^*$'s definition and since $k \geq k_0$ it holds that $1 - 2 \gamma_k a^* > 0$. Therefore and since (\ref{eq:yineq2}) holds for all $k \geq k_\epsilon$, it can be written that for all $k \geq k_\epsilon$
\begin{equation*}
    y_k - \frac{\epsilon}{2} \leq \biggl( \prod_{k' = k_\epsilon}^k (1 - 2 \gamma_{k'} a^*) \biggr) ( y_{k_\epsilon - 1} - \frac{\epsilon}{2} ) \leq \biggl( \prod_{k' = k_\epsilon}^k (1 - 2 \gamma_{k'} a^*) \biggr) | y_{k_\epsilon - 1} - \frac{\epsilon}{2} | .
\end{equation*}
We use inequality $1 - x \leq \exp(- x)$ to deduce that
\begin{align}
    y_k - \frac{\epsilon}{2} &\leq \biggl( \prod_{k' = k_\epsilon}^k (1 - 2 \gamma_{k'} a^*) \biggr) | y_{k_\epsilon - 1} - \frac{\epsilon}{2} | \nonumber \\
    &\leq \exp \biggl( - 2 a^* \sum_{k' = k_\epsilon}^k \gamma_{k'} \biggr) | y_{k_\epsilon - 1} - \frac{\epsilon}{2} | . \label{eq:yineq3}
\end{align}
If $ | y_{k_\epsilon - 1} - \frac{\epsilon}{2} | = 0$, then from (\ref{eq:yineq3}) it follows that $y_k \leq 0 + \frac{\epsilon}{2} < \epsilon$ for all $k \geq k_\epsilon$. Otherwise, from (\ref{eq:step}) there exists $k_\epsilon'$ such that $\sum_{k' = k_\epsilon}^{k_\epsilon'} \gamma_{k'} > \frac{\log | y_{k_\epsilon - 1} - \frac{\epsilon}{2} | - \log \frac{\epsilon}{2} }{2 a^*}$. Then from (\ref{eq:yineq3}) it follows that for all $k \geq k_\epsilon'$ $y_k - \frac{\epsilon}{2} < \frac{\epsilon}{2}$ or $y_k < \epsilon$. Since $y_k \geq 0$ by definition, we have proven that $\lim_{k \to \infty} y_k = 0$, or
\begin{equation} \label{eq:lim}
    \lim_{k \to \infty} \E {(\theta_k^{(1)} - x^* )^2} = 0 .
\end{equation}
Again, let $k > k_6$. Let $\phi_0 = \theta_k, \dots, \phi_r$ be a rollout (\ref{eq:maml}) of inner GD for task $\mathcal{T}_i$. Then according to (\ref{eq:mamlgrad})
\begin{equation*}
    \nabla_{\theta_k} \mathcal{L}^{out} (U (\theta_k, \mathcal{T}_i), \mathcal{T}_i)^{(1)} = \mathcal{G}_{FO} (\theta_k, \mathcal{T}_i)^{(1)} \prod_{j = 0}^{r - 1} (1 - \alpha f_i'' (\phi_j^{(1)})) .
\end{equation*}
From (\ref{eq:iprop}) it follows that $f_i'' (\phi_j^{(1)}) = a_i$ for $j \in \{ 0, \dots, r - 1 \}$. Moreover, we use (\ref{eq:fomamloutcnt}) to obtain that
\begin{equation*}
    \nabla_{\theta_k} \mathcal{L}^{out} (U (\theta_k, \mathcal{T}_i), \mathcal{T}_i)^{(1)} = a_i (1 - \alpha a_i)^{2 r} ( \theta_k^{(1)} - \frac{b_i}{a_i} )
\end{equation*}
and
\begin{align*}
    \nabla_{\theta_k} \mathcal{M} (\theta_k)^{(1)} &= \mathbb{E}_{p(\mathcal{T})} \left[\nabla_{\theta_k} \mathcal{L}^{out} (U (\theta_k, \mathcal{T}), \mathcal{T})^{(1)} \right] \\
    &= \frac{1}{2} \biggl( a_1 (1 - \alpha a_1)^{2 r} ( \theta_k^{(1)} - \frac{b_1}{a_1} ) + a_2 (1 - \alpha a_2)^{2 r} ( \theta_k^{(1)} - \frac{b_2}{a_2} ) \biggr) \\
    &= \widehat{a} \theta_k^{(1)} - \widehat{b}
\end{align*}
where
\begin{equation*}
    \widehat{a} = \frac{1}{2} (a_1 (1 - \alpha a_1)^{2 r} + a_2 (1 - \alpha a_2)^{2 r}), \quad \widehat{b} = \frac{1}{2}(b_1 (1 - \alpha a_1)^{2 r} + b_2 (1 - \alpha a_2)^{2 r}).
\end{equation*}
Notice that since $b_1 = 0$ and $b_2$ is defined by (\ref{eq:b2def}), it appears that $| \widehat{a} x^* - \widehat{b} | = \sqrt{2 D}$, or $(\widehat{a} x^* - \widehat{b})^2 = 2 D$. Multiply (\ref{eq:lim}) by $\widehat{a}$ to obtain that
\begin{align}
    \lim_{k \to \infty} \mathbb{E} \Big[ (\widehat{a} \theta_k^{(1)} &- \widehat{a} x^* )^2 \Big] = 0, \\
    \lim_{k \to \infty} \mathbb{E} \Big[( \widehat{a} \theta_k^{(1)} - \widehat{b} &- (\widehat{a} x^* - \widehat{b}))^2 \Big] = 0, \nonumber \\
    \lim_{k \to \infty} \mathbb{E} \Big[ ( \nabla_{\theta_k} \mathcal{M} (\theta_k)^{(1)} &- (\widehat{a} x^* - \widehat{b}))^2 \Big]  = 0 . \label{eq:lim2}
\end{align}
For each $k \geq 1$ by Jensen's inequality :
\begin{equation*}
     \left( \E {\nabla_{\theta_k} \mathcal{M} (\theta_k)^{(1)}} - (\widehat{a} x^* - \widehat{b}) \right)^2 \leq  \E{ (\nabla_{\theta_k} \mathcal{M} (\theta_k)^{(1)} - (\widehat{a} x^* - \widehat{b}))^2}.
\end{equation*}
Hence,
\begin{equation*}
    \lim_{k \to \infty} \left( \E {\nabla_{\theta_k} \mathcal{M} (\theta_k)^{(1)} } - (\widehat{a} x^* - \widehat{b})) \right)^2 = 0, \quad \lim_{k \to \infty} \E{\nabla_{\theta_k} \mathcal{M}^{(1)}} = (\widehat{a} x^* - \widehat{b})
\end{equation*}
and by expanding (\ref{eq:lim2}) we derive that
\begin{equation*}
    \lim_{k \to \infty} \E{ \left(\nabla_{\theta_k} \mathcal{M}^{(1)}\right)^{2} } = 2 (\widehat{a} x^* - \widehat{b}) \lim_{k \to \infty} \E{\nabla_{\theta_k} \mathcal{M}^{(1)}} - (\widehat{a} x^* - \widehat{b})^2 = (\widehat{a} x^* - \widehat{b})^2 = 2 D.
\end{equation*}
We conclude the proof by observing that
\begin{equation*}
    \liminf_{k \to \infty}\E{ \| \nabla_{\theta_k} \mathcal{M}\|^2_2 } =  \lim_{k \to \infty} \E{\| \nabla_{\theta_k} \mathcal{M} \|^2_2} = \lim_{k \to \infty} \E{ \left(\nabla_{\theta_k} \mathcal{M}^{(1)} \right)^{2} } = 2 D > D.
\end{equation*}
\end{proof}

\end{document}